\documentclass{article} 
\PassOptionsToPackage{table}{xcolor}
\usepackage{iclr2025_conference,times}
\iclrfinalcopy 

\usepackage{amsmath}
\usepackage{amssymb}
\usepackage{amsfonts}
\usepackage{amsthm}
\newtheorem{proposition}{Proposition}
\newtheorem{definition}{Definition}
\usepackage{bm}
\usepackage{booktabs}
\usepackage{multirow}
\usepackage{xspace}
\usepackage{graphicx}
\usepackage{algorithm}
\usepackage{algorithmic}
\usepackage{hyperref}
\usepackage{url}
\usepackage{tikz}

\definecolor{linkcol}{RGB}{46,111,176} 
\newcommand{\codeicon}{\raisebox{-0.22em}{\includegraphics[height=1.05em]{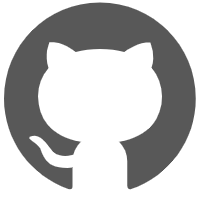}}}
\newcommand{\globeicon}{\raisebox{-0.22em}{\includegraphics[height=1.05em]{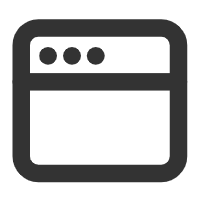}}}
\newcommand{\videoicon}{\raisebox{-0.22em}{\includegraphics[height=1.05em]{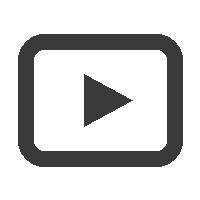}}}
\newcommand{\iconlink}[3]{\href{#1}{#2\,\,{\color{linkcol}\textbf{#3}}}}

\makeatletter
\def\ps@plain{%
  \let\@mkboth\@gobbletwo
  \let\@oddhead\@empty
  \let\@evenhead\@empty
  \def\@oddfoot{\hfil\thepage\hfil}
  \def\@evenfoot{\hfil\thepage\hfil}
}
\makeatother
\newcommand{\coax}{\textsc{CoAx}\xspace}
\newcommand{\comp}{\mathrm{comp}}
\newcommand{\energy}{\mathcal{E}}
\newcommand{\dz}{\delta z}
\newcommand{\Hkern}{\bm{H}}
\newcommand{\Sset}{\mathcal{S}}
\newcommand{\Uset}{\mathcal{U}}
\newcommand{\best}[1]{\textbf{#1}}
\newcommand{\secnd}[1]{\underline{#1}}
\definecolor{hdr}{RGB}{226,236,245}     
\definecolor{ourrow}{RGB}{238,244,250}  
\definecolor{ourred}{RGB}{46,111,176}   

\title{Conditional Co-Ablation: Recovering Self-Repair Backups in Transformer Circuits}

\author{%
Zhiren Gong$^{1,2}$ \quad
Zihao Zeng$^{1}$ \quad
Chau Yuen$^{3}$ \quad
Wei Yang Bryan Lim$^{1}$\\
$^{1}$College of Computing and Data Science, Nanyang Technological University, Singapore\\
$^{2}$Interdisciplinary Graduate Programme, Nanyang Technological University, Singapore\\
$^{3}$School of Electrical and Electronic Engineering, Nanyang Technological University, Singapore\\
\texttt{zhiren001@e.ntu.edu.sg} \quad
\texttt{bryan.limwy@ntu.edu.sg}
}

\begin{document}
\maketitle

\thispagestyle{plain} 

\vspace{-10mm}
\begin{center}\vspace{-2pt}
{\hypersetup{hidelinks}%
\iconlink{https://github.com/GongZhiren/Conditional-Co-Ablation}{\codeicon}{Code}\hspace{20pt}%
\iconlink{https://gongzhiren.github.io/Conditional-Co-Ablation-website}{\globeicon}{Project Page}\hspace{20pt}%
\iconlink{https://gongzhiren.github.io/Conditional-Co-Ablation-website/tutorial.html}{\videoicon}{Tutorial}}
\end{center}
\vspace{2mm}

\begin{abstract}
Mechanistic interpretability often relies on component-level interventions to discover how a model produces a behavior. This guides attribution, capability knockout, and model pruning downstream to operate by scoring \emph{each} unit by the effect of ablation in isolation. Such first-order scoring is natural when component importance is additive, but becomes misleading when a transformer \emph{self-repairs}: after a primary component is removed, a dormant backup can take over, muting the primary's measured effect while the backup itself appears irrelevant on the intact model. We recast this failure as a recovery task, \emph{conditional circuit completion}, and introduce \emph{conditional co-ablation} (\coax), a label-free, output-grounded score that asks how much each remaining unit's ablation effect grows once a primary set has been removed. This conditional growth exposes the second-order interaction that single-unit scores discard. On the GPT-2-small IOI circuit, \coax raises backup-head recovery from $0.33$ to $0.91$ ROC-AUC, outperforming all baselines, including self-repair-aware gradient scores (best $0.82$); counterfactual patching verifies that the recovered heads causally carry the repair. The same label-free procedure transfers to induction across eight models. Beyond discovery, the recovered backups correct self-repair-masked attribution, identify the components required for capability knockout, and yield repair-aware structured pruning scaling from $124$M to $7$B. Component importance is therefore not merely an isolated-unit property: in robust circuits, the components that matter can become visible only under the interventions that make them necessary.

\end{abstract}

\section{Introduction}
Understanding how a language model produces a behavior is central to auditing, debugging, and trusting it~\citep{bereska2024mechanistic}. A common strategy in mechanistic interpretability is to localize a behavior to a \emph{circuit}: a small set of components that carries out the relevant computation~\citep{elhage2021framework,olah2020zoom}. This strategy depends on a basic primitive: assigning importance to individual components. The same primitive underlies automated circuit discovery~\citep{wang2022ioi,conmy2023acdc}, feature attribution~\citep{syed2023attribution,kramar2024atp}, and structured pruning~\citep{kwon2022fastpruning,sun2023wanda}. In many such methods, importance is estimated in a first-order way: ablate a unit in isolation, measure the output change, and combine these single-unit effects across the circuit. Attribution and edge patching, EAP-IG~\citep{sundararajan2017integrated,hanna2024faith}, AtP$^\star$~\citep{kramar2024atp}, sparse feature circuits~\citep{marks2024sparse}, and Wanda saliency~\citep{sun2023wanda} all rely, in different forms, on this node-additive view.

This view is natural when components contribute independently, but self-repair shows where it can fail. In the Hydra effect~\citep{mcgrath2023hydra,rushing2024selfrepair}, removing a primary component can activate a dormant \emph{backup} that substitutes for it. The primary then appears less important than it is, because the model repairs the damage; the backup also appears unimportant, because it is largely silent on the intact model. A first-order score therefore misreads both sides of the redundancy. In GPT-2-small IOI, for example, ablating the name-mover heads that write the answer reduces the task logit-difference by only $0.22$ from a clean value of $2.53$, because backup name-movers take over. The larger effect appears only when the backups are removed as well. This is not a noisy measurement artifact but rather a violation of additivity: the IOI name-mover module is $1.9\times$ super-additive, so the effect of removing a set is not the sum of the effects of removing its members. The resulting bias falls exactly on the redundant components that make a circuit robust and therefore propagates to attribution, knockout, and pruning downstream.

\begin{figure}[t]\centering
\includegraphics[width=\textwidth]{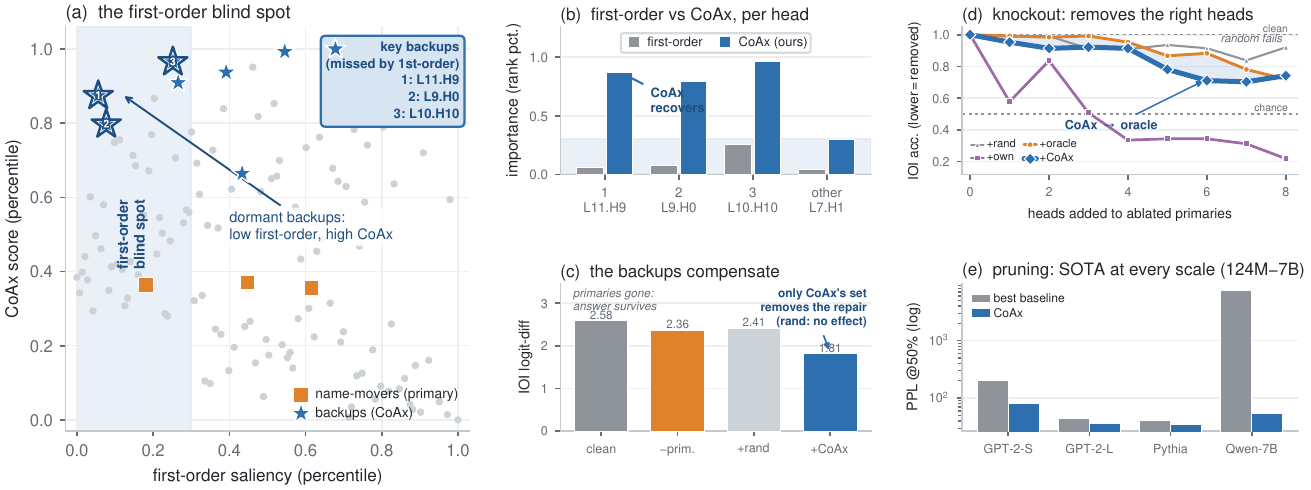}
\caption{\textbf{\coax exposes the self-repair backups first-order scoring misses, and the same score pays
off downstream} on GPT-2-small. \textbf{(a)} The
documented backups sit in the \emph{first-order blind spot}, lifting backup ROC-AUC $0.33\!\to\!0.91$. \textbf{(b)} First-order leaves the key
backups in the blind spot while \coax ranks them at the top, whereas an inactive head stays low under
both. \textbf{(c)} Ablating the primaries barely moves the IOI answer margin
(the backups take over), and only ablating the backups too collapses it. \textbf{(d)} Knockout:
\coax-ordered additions drive IOI accuracy to the documented oracle (lower $=$ more removed), while a
first-order top-up ($+$own) overshoots and a random one fails to move it. \textbf{(e)} Pruning: at $50\%$
sparsity \coax beats the best competing baseline at every scale from $124$M to $7$B (full sweep,
Figure~\ref{fig:apps}). Attribution and generalization: Section~\ref{sec:apps},
Figure~\ref{fig:generalization}.}
\label{fig:teaser}
\end{figure}

We therefore ask a conditional question: not how much a unit matters on the intact model, but how much it matters once the primary circuit has been removed. Given a candidate primary set $\Sset$, we measure how much each remaining unit's output effect \emph{grows} under that intervention. A dormant backup has little effect alone but a large conditional effect; an irrelevant unit has neither. This reframing is substantive, not merely procedural: importance stops being an intrinsic property a unit carries and becomes a quantity defined relative to what has been removed, the only frame in which a component that stays silent until its primary fails can be said to matter at all. We call this score \coax, for \emph{conditional co-ablation}. It is label-free and output-grounded: it measures growth in Fisher-weighted ablation energy using a single-ablation baseline and one conditional pass. Conceptually, \coax reads the second-order interaction that node-additive scores discard. Operationally, it solves \emph{conditional circuit completion}: given a primary circuit found by manual analysis or a first-order method, recover the components that become causal only when that circuit is removed. Thus \coax complements first-order discovery rather than replacing it.

Empirically, on the labeled GPT-2-small IOI circuit~\citep{wang2022ioi}, \coax raises backup-head identification from $0.33$ to $0.91$ ROC-AUC, outperforming baselines including self-repair-aware AtP$^\star$ GradDrop (best $0.82$). Counterfactual patching confirms that the recovered heads causally carry the repair. The same label-free procedure transfers to induction across eight models. Once recovered, the backups also close the downstream loop: they restore a much larger self-repair-masked attribution signal ($1.76$ versus $0.22$ logit-difference drop), identify the components required for capability knockout ($0.70$ accuracy, matching the $0.72$ documented oracle), and improve structured head pruning, scaling from $124$M to $7$B. In short, first-order methods find the circuit a model uses when everything is intact; \coax exposes the circuit it falls back on when that path is removed.

\textbf{Contributions.}
\textbf{(i)~Conditional component importance.} We formalize self-repair as an additivity failure of component importance, introduce conditional circuit completion, and propose \coax, a label-free, Fisher-grounded score for recovering dormant compensators (Section~\ref{sec:method}).
\textbf{(ii)~Causal validation of recovered backups.} We validate \coax on the documented GPT-2 IOI backup heads, a controlled-redundancy benchmark, mechanistic patching, and faithfulness/completeness checks, and show label-free transfer to induction across eight models in six architecture families
(Section~\ref{sec:discovery}).
\textbf{(iii)~Downstream closure.} We show that the recovered backups correct self-repair-masked attribution, scope capability knockout, and yield repair-aware structured pruning (Section~\ref{sec:apps}).

\section{Conditional Co-Ablation: Exposing Non-Additive Importance}\label{sec:method}
We study frozen decoder-only transformers. A \emph{structured unit} $u$ is any component with an additive residual-stream contribution that ablation can zero~\citep{elhage2021framework}; \coax is unit-agnostic, and our main experiments use attention heads, the granularity at which head-level circuit ground truth exists. Ablating a set $M$ of units replaces their contributions with zero and yields logits $z_M$ over the vocabulary at a given position, with $z_\emptyset \equiv z_0$ the clean logit vector and $p_0=\mathrm{softmax}(z_0)$ the clean distribution (all symbols collected in
Table~\ref{tab:notation}). The single-unit ablation effect is $\dz_u = z_0 - z_{\{u\}}$.

\subsection{The conditional co-ablation score}\label{sec:score}

\paragraph{A Fisher geometry over ablations.}
We measure ablation effects in the metric the output distribution induces~\citep{amari1998natural}. The
Fisher information of the categorical output in logit coordinates, $F = \mathrm{diag}(p_0) - p_0 p_0^\top$,
is exactly the Hessian of $\mathrm{KL}(p_0\,\|\,\mathrm{softmax}(z))$ at $z_0$, so a logit perturbation $\dz$ has second-order KL
cost $\tfrac12\,\dz^\top F\,\dz + O(\|\dz\|^3)$. We realize this metric with the centered, Fisher-weighted
feature $\widetilde{\dz}_u = \sqrt{p_0}\odot(\dz_u - \mathbb{E}_{p_0}[\dz_u]\mathbf{1})$ on the top-$r$
logits. Centering subtracts the shared logit-shift that an ablation imparts to every output coordinate at
once, leaving only the part of the perturbation that changes the \emph{shape} of the distribution; the
$\sqrt{p_0}$ weighting then makes the plain inner product coincide with the Fisher form,
$\langle \widetilde{\dz}_u, \widetilde{\dz}_v\rangle = \dz_u^\top F\, \dz_v$ (Proposition~\ref{prop:fisher}),
so $\energy(\dz_u)\!:=\!\mathbb{E}_{x,t}\|\widetilde{\dz}_u\|^2$ is the mean KL energy of ablating $u$.
Collecting the centered features into a design matrix $\bar D$ (one column per unit, its rows running over
the top-$r$ logits at each of the $P$ calibration positions) gives the Gram (hence PSD) kernel
$\Hkern = \tfrac1P \bar D^\top \bar D$ and the Fisher-cosine affinity
$A_{uv} = \Hkern_{uv}/\sqrt{\Hkern_{uu}\Hkern_{vv}} \in [-1,1]$, our first-order baseline (close to recent
weight-space head kernels~\citep{yamagiwa2026projection}). Built from \emph{single} ablations, $\Hkern$ is
stable but first-order; the genuinely second-order content that recovers compensation lives in two objects
it omits, and the rest of this section builds them.

\paragraph{Pairwise synergy.}
For units $u,v$, let $\dz_{uv} = z_0 - z_{\{u,v\}}$ be the joint-ablation perturbation. The synergy is the
non-additive part,
\begin{equation}
I_{uv} = \dz_{uv} - \dz_u - \dz_v, \qquad S_{uv} = \mathbb{E}_{x,t}\,\|\widetilde{I}_{uv}\|^2 ,
\end{equation}
which vanishes when units act independently and is large when $u$ and $v$ \emph{compensate} for one
another. It captures symmetric cooperation that first-order affinity does not model.

\paragraph{The \coax score.}
A backup is dormant until the primaries are gone, so we score units not in isolation but \emph{after
conditioning} on a seed set $\Sset$ (typically the high-saliency primaries).

\begin{definition}[Conditional co-ablation score]\label{def:coax}
The \emph{conditional ablation effect} of a unit $u$ given $\Sset$ is its marginal effect once $\Sset$ is
already ablated, $\dz_{u\mid\Sset} = z_\Sset - z_{\Sset\cup\{u\}}$ (so $\dz_u\equiv\dz_{u\mid\emptyset}$),
with energy $\energy(\dz_u\mid\Sset) = \mathbb{E}_{x,t}\|\widetilde{\dz}_{u\mid\Sset}\|^2$. The \coax
score is the \emph{growth} of this energy under conditioning,
\begin{equation}
\comp_u(\Sset) = \energy\!\big(\dz_u \mid \Sset\big) - \energy\!\big(\dz_u \mid \emptyset\big).
\label{eq:comp}
\end{equation}
\end{definition}

\noindent This \emph{conditional growth} is large for backups, whose effect appears only once $\Sset$ is ablated,
and near zero for non-redundant units. Computing it costs $O(|\Uset|)$ forward passes per seed -- the same
order as one single-ablation scan, and far below the $O(|\Uset|^2)$ of enumerating all pairs explicitly --
so the conditional form is what lets the second-order signal scale to large models. The algorithm, its
cost, and its calibration-data efficiency are in Appendix~\ref{app:method}.

\paragraph{Division of labor.}
The two signals are the two faces of non-additivity and answer two different questions. Pairwise synergy
$I_{uv}$ is the lens for \emph{cooperation}, recovering symmetric same-circuit structure. The \coax
score $\comp_u(\Sset)$ is the lens for \emph{substitution}, recovering dormant backups as a node-level
\emph{compensating set}. The headline discovery and all applications use the \coax score; synergy
carries the same-circuit analysis of Table~\ref{tab:main}. A glossary of the terms
(primary seed, backup candidate, compensating set, conditional growth, circuit completion) is given in
Appendix~\ref{app:terms}.

\begin{figure}[t]\centering
\includegraphics[width=\textwidth]{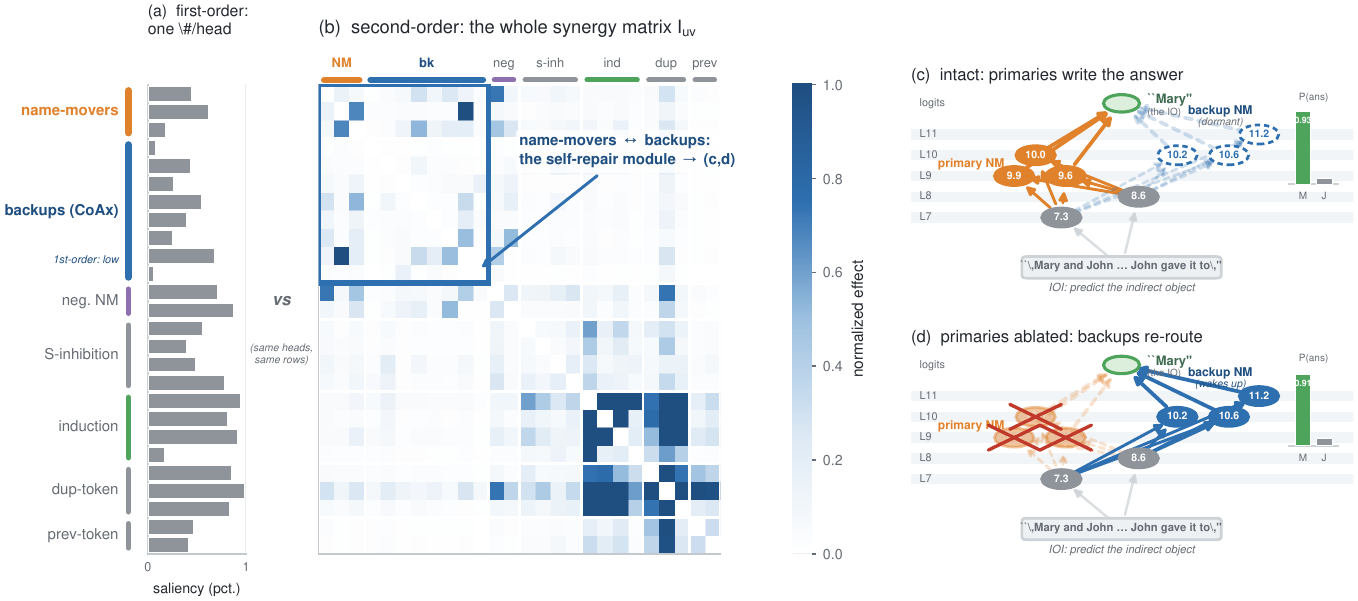}
\caption{\textbf{The second-order structure \coax exploits, and the circuit it \emph{is}} (GPT-2-small).
\textbf{(a)} First-order scoring gives one number per head; \textbf{(b)} the pairwise synergy $I_{uv}$ is a
whole matrix, where the name-movers and their backups form a bright off-diagonal block (boxed), the
\emph{self-repair module} a per-head score cannot see. \textbf{(c,d)} The primaries write the answer while the backups stay dormant (c) while
ablating the primaries wakes the backups, which re-route to the logit (d).}
\label{fig:structure}
\end{figure}

\subsection{Why first-order misses backups and \coax does not}\label{sec:theory}
Three short results make the construction precise and explain the empirics with proofs in
Appendix~\ref{app:proofs}. The first grounds the geometry.
\begin{proposition}[Fisher identity]\label{prop:fisher}
For any two ablation effects, $\langle \widetilde{\dz}_u, \widetilde{\dz}_v\rangle = \dz_u^\top F\,\dz_v$
with $F=\mathrm{diag}(p_0)-p_0p_0^\top$. Consequently $\Hkern$ is a positive-semidefinite Fisher Gram
matrix and $\energy(\dz_u)=\mathbb{E}_{x,t}[\dz_u^\top F\,\dz_u]$ is the mean KL energy of ablating $u$
(twice its second-order KL cost). The centering term is exactly the $-p_0p_0^\top$ of the Fisher form.
\end{proposition}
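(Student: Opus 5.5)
The plan is a direct expansion of the inner product, done per calibration position and then averaged. Fix a position with clean distribution $p_0$ on the top-$r$ logits, renormalized if needed; I assume this restriction is exactly what the identity and $F$ refer to. Write $c_u := \mathbb{E}_{p_0}[\dz_u] = p_0^\top \dz_u$. By definition $\widetilde{\dz}_u = \sqrt{p_0}\odot(\dz_u - c_u\mathbf{1})$, which is $D^{1/2}(\dz_u - c_u\mathbf{1})$ with $D=\mathrm{diag}(p_0)$.

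\textbf{Main identity.} First,
\[
\langle \widetilde{\dz}_u,\widetilde{\dz}_v\rangle = (\dz_u - c_u\mathbf{1})^\top D\,(\dz_v - c_v\mathbf{1}).
\]
Expanding and using $D\mathbf{1}=p_0$ and $\mathbf{1}^\top D\mathbf{1}=\sum_i p_{0,i}=1$ gives
\[
\dz_u^\top D\dz_v - c_v\,\dz_u^\top p_0 - c_u\, p_0^\top\dz_v + c_uc_v = \dz_u^\top D\dz_v - c_uc_v .
\]
Since $c_uc_v=\dz_u^\top p_0p_0^\top\dz_v$, this is $\dz_u^\top(D-p_0p_0^\top)\dz_v=\dz_u^\top F\dz_v$. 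The same computation shows that the three centering cross-terms collapse to exactly $-p_0p_0^\top$, which is the last claim of the proposition. The only step needing care is the normalization $\mathbf{1}^\top p_0=1$. If the top-$r$ truncation is not renormalized, the identity instead holds with $F$ built from the truncated vector, and I would state that convention explicitly.

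\textbf{PSD and energy.} Stack the features over positions as the columns of $\bar D$. Then $\Hkern=\tfrac1P\bar D^\top\bar D$ is a Gram matrix, so $w^\top\Hkern w=\tfrac1P\|\bar D w\|^2\ge0$, and its entries are position-averages of $\dz_u^\top F\dz_v$. Hence it is a Fisher Gram matrix. Setting $u=v$ and averaging gives $\energy(\dz_u)=\mathbb{E}_{x,t}[\dz_u^\top F\dz_u]$.

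\textbf{KL interpretation.} Let $g(z)=\mathrm{KL}(p_0\|\mathrm{softmax}(z))=-p_0^\top z+\log\sum_i e^{z_i}+\text{const}$.
\begin{itemize}
\item The gradient at $z_0$ is $-p_0+\mathrm{softmax}(z_0)=0$.
\item The Hessian is that of log-sum-exp, $\mathrm{diag}(p)-pp^\top$, which equals $F$ at $z_0$.
\item Taylor expansion with the smooth remainder then gives $g(z_0-\dz)=\tfrac12\dz^\top F\dz+O(\|\dz\|^3)$.
\end{itemize}
So $\energy$ is twice the mean second-order KL cost of the ablation.

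The main obstacle is not algebraic but conventional: matching the top-$r$ restriction and the sign of $\dz$ to the KL expansion. The sign is harmless because the quadratic form is even in $\dz$.
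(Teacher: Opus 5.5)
Your proof is correct and follows the paper's argument: the same direct expansion of the centered, $\sqrt{p_0}$-weighted inner product using $\sum_i (p_0)_i=1$ and $\sum_i (p_0)_i\dz_{u,i}=\overline{\dz_u}$, the Gram/PSD observation for $\Hkern$, and the second-order Taylor expansion of $\mathrm{KL}(p_0\,\|\,\mathrm{softmax}(z))$ at $z_0$, where you simply spell out the vanishing gradient and the log-sum-exp Hessian instead of citing the standard softmax Fisher. One correction to your aside on truncation: without renormalization, write $s=\mathbf 1^\top p_0<1$; your own expansion then gives $\dz_u^\top\bigl(\mathrm{diag}(p_0)-(2-s)\,p_0p_0^\top\bigr)\dz_v$, so the identity does not hold with $F$ built from the truncated vector, and the normalization $\sum_i(p_0)_i=1$ (which the paper's proof also uses) is genuinely required.
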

The next formalizes the blind spot. Call a unit a \emph{pure backup} for $\Sset$ if it is dormant on the
clean model ($\dz_b=0$, equivalently zero clean activation) yet carries the effect once $\Sset$ is ablated
($\energy(\dz_b\mid\Sset)=\Delta>0$); a unit is \emph{inert} if it is silent throughout -- zero clean
activation, $\dz_u=0$, and $\energy(\dz_u\mid\Sset)=0$.
\begin{proposition}[Additivity blind spot]\label{prop:blind}
Let a score assign unit $u$ the value $g(\theta_u)$ for some fixed $g$ and a per-unit statistic
$\theta_u$ computed from the \emph{clean} forward/backward pass. If $\theta$ is \emph{invariant} between
a pure backup and an inert unit (as any function of a unit's clean-state marginal effect or clean
activation is, since both vanish for a dormant unit), the score assigns the two the same value and
cannot rank one above the other. By contrast $\comp_b(\Sset)=\Delta>0$ while
$\comp_{\text{inert}}(\Sset)=0$, so \coax separates them.
\end{proposition}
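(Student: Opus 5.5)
The proof has two independent halves, and I would handle them separately.

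\textbf{First half: the clean-state score cannot separate the two units.} This part is purely definitional. By hypothesis the score has the form $g(\theta_u)$ with $g$ fixed and $\theta_u$ computed from the clean forward/backward pass. We are also told that $\theta$ is invariant between a pure backup $b$ and an inert unit $i$, i.e.\ $\theta_b=\theta_i$. Hence $g(\theta_b)=g(\theta_i)$, and no ordering induced by the score can place $b$ strictly above $i$.

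To justify the parenthetical claim, I would check the two natural instances of $\theta$.
\begin{itemize}
\item If $\theta_u$ is a function of the clean marginal effect, then $\dz_b=0=\dz_i$ by the definitions of pure backup and inert unit. So $\theta_b=\theta_i$ whenever $\theta_u$ depends on $u$ only through $\dz_u$. The same holds for $\energy(\dz_u)$ and for the Fisher-cosine row $A_{u\cdot}$, reading $A_{u\cdot}$ as $0$ by convention, since $\Hkern_{uu}=0$.
\item If $\theta_u$ is a clean activation, or any gradient-times-activation saliency (AtP-style, Wanda-style), then both units have zero clean activation. Any statistic that is a product with, or a function of, that activation therefore coincides for the two.
\end{itemize}
I would state these as illustrations of the hypothesis rather than prove a general characterization.

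\textbf{Second half: \coax does separate them.} This is a direct computation from Definition~\ref{def:coax}. For the backup,
\[
\comp_b(\Sset)=\energy(\dz_b\mid\Sset)-\energy(\dz_b\mid\emptyset)=\Delta-\energy(\dz_b).
\]
Since $\dz_b=0$, the centered feature $\widetilde{\dz}_b$ vanishes. Proposition~\ref{prop:fisher} then gives $\energy(\dz_b)=\mathbb{E}[0^\top F\,0]=0$, so $\comp_b(\Sset)=\Delta$.

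For the inert unit, both terms vanish by the same argument, so $\comp_i(\Sset)=0$. Since $\Delta>0$, \coax strictly ranks $b$ above $i$.

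\textbf{Main obstacle.} The only delicate step is making the first half honest. The word ``invariant'' has to be taken as the hypothesis, because a clean-pass statistic could in principle encode something like weight norms that do differ between the two units. I would therefore phrase the conclusion conditionally on $\theta_b=\theta_i$ and confine the ``as any function of\ldots'' remark to the clean-marginal-effect and clean-activation cases verified above.
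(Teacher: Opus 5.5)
Your proposal is correct and follows the paper's proof essentially step for step. Invariance gives $\theta_b=\theta_{\text{inert}}$ and hence equal scores, and $\comp_b(\Sset)=\Delta-0$ while $\comp_{\text{inert}}(\Sset)=0$; note that the inert unit's conditional energy is zero by the definition of inertness, not by the $\dz=0$ argument you cite. One small correction to your illustrations: the paper explicitly declines to claim invariance for weight-magnitude scores and groups Wanda with them, because Wanda's activation factor is the input to a weight matrix and need not be the unit's vanishing output, so keep only AtP-style activation-times-gradient scores in that list.
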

The third explains \emph{why} the conditional score works and unifies the two signals: a dormant
backup's conditional growth \emph{is} its second-order coupling to the seed.
\begin{proposition}[Conditional growth is seed synergy]\label{prop:synergy}
Under a pairwise-interaction truncation of the joint ablation effect,
$\dz_{u\mid\Sset}=\dz_u+\sum_{s\in\Sset} I_{su}$, hence
$\comp_u(\Sset)=\mathbb{E}_{x,t}\big[2\langle\widetilde{\dz}_u,\textstyle\sum_{s}\widetilde{I}_{su}\rangle
+\|\sum_{s}\widetilde{I}_{su}\|^2\big]$. For a dormant unit ($\dz_u\approx 0$),
$\comp_u(\Sset)\approx\mathbb{E}_{x,t}\|\sum_{s\in\Sset}\widetilde{I}_{su}\|^2$, the energy of its
synergistic coupling to the seed.
\end{proposition}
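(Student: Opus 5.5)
The plan is to make the ``pairwise-interaction truncation'' explicit and then expand the energy difference using the linearity of the tilde map. First I fix what the truncation means. Define higher-order interaction terms by Möbius inversion over subsets, $\dz_M = \sum_{\emptyset\neq T\subseteq M} J_T$, where $J_{\{u\}}=\dz_u$ and $J_{\{u,v\}}=I_{uv}$. The truncation sets $J_T=0$ for $|T|\ge 3$, so that $\dz_M=\sum_{u\in M}\dz_u+\sum_{\{u,v\}\subseteq M}I_{uv}$.

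Since $z_\Sset = z_0-\dz_\Sset$ and $z_{\Sset\cup\{u\}}=z_0-\dz_{\Sset\cup\{u\}}$, we have $\dz_{u\mid\Sset}=\dz_{\Sset\cup\{u\}}-\dz_\Sset$ for $u\notin\Sset$. In this difference every term involving only $\Sset$ cancels. What survives is $\dz_u$ together with the pair terms containing $u$, which gives $\dz_{u\mid\Sset}=\dz_u+\sum_{s\in\Sset}I_{su}$, the first claim.

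Next I use that $v\mapsto\widetilde v=\sqrt{p_0}\odot(v-\mathbb{E}_{p_0}[v]\mathbf 1)$ is linear at each position, because $p_0$ is fixed by the clean pass. Hence $\widetilde{\dz}_{u\mid\Sset}=\widetilde{\dz}_u+\sum_s\widetilde I_{su}$. Expanding the squared norm gives $\|\widetilde{\dz}_u\|^2+2\langle\widetilde{\dz}_u,\sum_s\widetilde I_{su}\rangle+\|\sum_s\widetilde I_{su}\|^2$. Taking $\mathbb{E}_{x,t}$ and subtracting $\energy(\dz_u\mid\emptyset)=\mathbb{E}\|\widetilde{\dz}_u\|^2$ yields the stated formula for $\comp_u(\Sset)$. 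By Proposition~\ref{prop:fisher}, each inner product can equivalently be read as a Fisher form, though this is not needed.

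For the dormant case, I would bound the cross term by Cauchy--Schwarz: $|\mathbb{E}\,2\langle\widetilde{\dz}_u,\widetilde\Sigma\rangle|\le 2\sqrt{\energy(\dz_u)}\,\sqrt{\mathbb{E}\|\widetilde\Sigma\|^2}$, where $\widetilde\Sigma=\sum_s\widetilde I_{su}$. I would also note $\|\widetilde v\|\le\|v\|$, since centering in the $p_0$-weighted norm is an orthogonal projection. Therefore, as $\dz_u\to 0$, the cross term is $O(\|\dz_u\|)$, and in the exact-dormancy case $\dz_u=0$ it vanishes. That leaves $\comp_u(\Sset)\approx\mathbb{E}\|\sum_s\widetilde I_{su}\|^2$.

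The main obstacle is not computational but definitional. One must state precisely what the truncation assumes and what ``$\approx$'' means. I would present the first identity as exact under vanishing third- and higher-order interactions, and the approximation as exact at $\dz_u=0$ with an error of order $\sqrt{\energy(\dz_u)}$ otherwise. I would also remark that the truncation is assumed only for index sets of the form $\Sset\cup\{u\}$, and that the top-$r$ restriction is harmless because it is a fixed coordinate selection and so preserves linearity.
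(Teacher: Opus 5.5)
Your proposal is correct and follows the paper's proof essentially step for step: the same M\"obius expansion with the order-$\geq 3$ terms truncated, the same cancellation in $\dz_{\Sset\cup\{u\}}-\dz_\Sset$, linearity of the feature map, and expansion of the squared norm. Your Cauchy--Schwarz bound on the cross term, $2\sqrt{\energy(\dz_u)}\,\sqrt{\mathbb{E}_{x,t}\|\sum_s\widetilde I_{su}\|^2}$, usefully makes quantitative the paper's informal statement that $\widetilde{\dz}_u\approx 0$ makes the cross term vanish, but it leaves the argument unchanged.
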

Together, these results give the picture. A backup is an output-space \emph{substitute} for the primaries:
it writes a correlated logit direction, so its synergy $I_{sb}$ with the seed is large even as its solo
effect $\dz_b$ stays small. Proposition~\ref{prop:blind} makes such a head invisible to any score meeting
its invariance condition; Proposition~\ref{prop:synergy} shows that $\comp$ reads exactly the off-diagonal
interaction those scores discard, so it is genuinely second-order rather than an output-grounded
re-ablation. On IOI the name-movers and their backups bind into a dense high-synergy \emph{module}
(Figure~\ref{fig:structure}): off-diagonal mass in $I_{uv}$ that single-ablation saliency, reading only the
diagonal, leaves dark, which is why the backups occupy the low-saliency, high-\coax corner of
Figure~\ref{fig:teaser}a. The methods we test meet the idealized invariance only approximately, weighting
each unit by clean-state statistics that nearly vanish for a near-dormant backup; this is why they land at
$0.33$--$0.82$ rather than chance, while \coax reaches $0.91$ (Table~\ref{tab:backup}).

\section{Completing Circuits by Recovering Backups}\label{sec:discovery}
We establish \coax on GPT-2-small IOI, the one circuit with labeled backups. In turn, we \textbf{recover} the
documented backups, \textbf{verify} they are mechanistically real, check their \textbf{faithfulness,
completeness, and minimality}, \textbf{complete} circuits found automatically by first-order methods, and
show the discovery \textbf{generalizes} across scale and architecture.

\paragraph{Protocol.}
We use the GPT-2-small~\citep{radford2019gpt2} IOI circuit~\citep{wang2022ioi}, the only one with head-level
backup ground truth.
Two ROC-AUCs appear and are never compared: \emph{backup-AUC} is node-level, ranking the documented backups
given the primaries (the headline metric of Table~\ref{tab:backup}), and \emph{cluster-AUC} is pair-level,
scoring whether two heads share a circuit (Table~\ref{tab:main}). Backup-discovery numbers are means over
$4$ prompt seeds (std $\le\!0.04$); the controls, per-experiment seed counts, and full protocol are in
Appendix~\ref{app:setup}.

\begin{table}[t]
\begin{minipage}{0.53\textwidth}\centering\small
\setlength{\tabcolsep}{4pt}
\caption{\textbf{Backup-head discovery} (GPT-2-small, node-level ROC-AUC over the eight documented backups,
mean$_{\pm\text{std}}$, $4$ seeds). Additive, gradient-based, and self-repair-aware scores fall short;
\coax exposes the backups (primaries, right, are easy for all). $^\dagger$\,Given \coax's primary-ablated
seed, the fair comparison is the seeded GIM ($0.63$), not the seed-free AtP$^\star$ ($0.82$).}
\label{tab:backup}
\begin{tabular}{lcc}
\toprule
\rowcolor{hdr}
signal & backup & primary \\
\midrule
single ablation \scriptsize{(1st)}      & $0.33{\pm}0.00$ & $0.43{\pm}0.03$ \\
AtP \scriptsize{(1st)}                   & $0.60{\pm}0.03$ & $0.81{\pm}0.01$ \\
GIM-style$^\dagger$ \scriptsize{(1st)}   & \secnd{$0.63{\pm}0.05$} & --- \\
EAP-IG \scriptsize{(1st)}                & $0.70{\pm}0.02$ & $0.75{\pm}0.03$ \\
AtP$^\star$ GradDrop \scriptsize{(1st)}  & $0.82{\pm}0.03$ & --- \\
\rowcolor{ourrow}
\coax$^\dagger$ \scriptsize{(2nd, ours)} & \best{$0.91{\pm}0.00$} & --- \\
\bottomrule
\end{tabular}
\end{minipage}\hfill
\begin{minipage}{0.44\textwidth}\centering\small
\setlength{\tabcolsep}{1.5pt}
\caption{\textbf{Completion module.} Adding \coax backups to a circuit found \emph{automatically} by each
first-order finder (top-$3$ heads) roughly doubles the joint-ablation IOI logit-difference drop, far above
random; it ties the finder's own next-$m$ heads ($+$own) while recruiting \emph{lower}-ranked heads, so the
value is \emph{which} heads, not faithfulness (\S\ref{sec:completion}; $2$ seeds, $m{=}4$).}
\label{tab:completion}
\begin{tabular}{lccccc}
\toprule
\rowcolor{hdr}
finder & prim. & \best{$+$\coax} & $+$own & $+$rand & FO pct. \\
\midrule
AtP            & 1.28 & \best{3.11} & 3.09 & 1.69 & 0.80 \\
EAP-IG         & 0.68 & \best{1.71} & 1.75 & 0.95 & 0.81 \\
AtP$^\star$    & 1.85 & \best{3.73} & 3.77 & 2.25 & 0.45 \\
\bottomrule
\end{tabular}
\end{minipage}
\vspace{-3mm}
\end{table}

\subsection{Backup discovery}\label{sec:bdisc}
Table~\ref{tab:backup} is the central result. Backup name-movers are hard for \emph{every} node-ranking
baseline we test, including those explicitly designed for self-repair: single-ablation saliency
($0.33$), attribution patching ($0.60$), a self-repair-aware GIM adaptation ($0.63$), and EAP-IG
($0.70$), a strong circuit-localization baseline on the Mechanistic Interpretability Benchmark
(MIB)~\citep{mueller2025mib}. The
strongest is AtP$^\star$ with GradDrop ($0.82$)~\citep{kramar2024atp}, whose gradient-dropout is built
to catch the direct-versus-indirect gradient cancellation self-repair induces; it leads the other
gradient-corrected methods yet still trails \coax at $0.91$. The gradient baselines locate the
\emph{primary} name-movers well ($\approx 0.78$; single ablation is itself partly masked at $0.43$). The
gap is not about conditioning or a smarter gradient but the \emph{node-additive form itself}: a backup's
contribution is a non-additive substitution that no additive score, however corrected, can model.
Conditioning on the primaries and reading the growth in ablation effect (Eq.~\ref{eq:comp}) makes the
backups visible: \coax places $6$ of the $8$ documented backups in its top $20$ of $141$ candidates
(top-$10$ recall $4/8$), far above chance.

One control deserves flagging up front: an input-side \emph{co-activation} score also ranks the IOI backups
highly ($0.92$ AUC), since they co-fire with the primaries, so \coax is not the only signal that finds them.
But co-activation is correlational: it offers no causal or patching validation, it collapses on movement
circuits (duplicate-token $0.32$ vs.\ \coax's $0.97$), and used as a completion signal it \emph{over}-ablates,
its top-$k$ pulling in co-firing core heads and flipping the IOI logit-difference sign
(Appendix~\ref{app:coact}). We therefore treat it separately (Section~\ref{sec:related}) and keep causal node-ranking scores.

\paragraph{\coax completes; it does not discover from scratch.}
\coax uses a primary seed, so we state both directions. The \emph{fair} comparison gives a baseline the
\emph{same} seed: our GIM-style gradient on the primary-ablated model still reaches only $0.63$, so
conditioning is not enough, and it is the \emph{form} of the score (full-distribution conditional growth,
not a metric gradient) that recovers the backups. Conversely, as a \emph{standalone} finder that must detect
its own seed, \coax peaks at $0.60$ (Appendix~\ref{app:seedrobust}), below the seed-free AtP$^\star$
($0.82$). The $0.91$ headline is thus a \emph{completion} result with documented primaries as the seed,
not standalone discovery.

\paragraph{What the IOI numbers establish.}
The ranking is statistically far from chance, by two tests that do not lean on the small positive set: a
label-permutation test places the $0.91$ AUC entirely outside the null ($p<10^{-4}$), and a hypergeometric
top-$k$ test is significant at every cutoff ($6/8$ in the top $20$, $p=9\times10^{-5}$). The head-to-head
comparison needs more care, because a backup score is only meaningful once it holds the primary seed.
Against the fair, same-seed baseline -- the seeded GIM at $0.63$ -- the $0.26$ gap is significant under a
paired DeLong test over the eight backups~\citep{delong1988} ($p=2\times10^{-3}$). The smaller $0.09$ gap
over the seed-free AtP$^\star$ ($0.82$) holds in every seed but is underpowered on only eight positives;
that under-power is an artifact of comparing against a baseline denied the seed, not evidence the effect is
fragile (all tests, and across-seed variants, in Appendix~\ref{app:sig}).

Because eight documented backups cannot supply a powered head-to-head comparison, a controlled-redundancy
synthetic benchmark does (Table~\ref{tab:synth}, Appendix~\ref{app:synth}). With $100$ planted backups the
clean-state scores stay at or below chance -- first-order energy at $0.42$ actively \emph{anti}-ranks them
-- while \coax reaches $0.90$ ($p<10^{-15}$), exactly as Proposition~\ref{prop:blind} predicts. The
benchmark also isolates \emph{which} property of the score does the work, and it is not conditioning alone:
a conditional but answer-aligned GIM proxy reaches $0.85$ yet collapses where the real IOI backups live, off
the answer direction (all eight have $\beta<0.01$, an energy fraction rather than a causal weight). Reading
the full conditional distribution, \coax stays invariant there -- the same reason the real GIM reaches only
$0.63$ on IOI (Appendix~\ref{app:beta}).

\subsection{Are the discovered heads real backups?}\label{sec:realbackups}
A discovery method must surface heads that \emph{behave} like backups, not merely match labels. For the
top-$10$ heads by \coax score we measure two label-free signatures: the \emph{activation ratio} (output
norm with primaries ablated over clean) and the \emph{conditional causal effect} (the logit-difference drop
from ablating the head once the primaries are gone). The documented backups among the top-$10$ score $1.21$
and $+0.21$ on these, versus $1.03$ and $-0.12$ for the other surfaced heads; keeping only heads that both
wake up and are load-bearing raises documented-backup precision in the top-$10$ from $0.40$ to $1.00$
(Appendix~\ref{app:discovery}). To rule out circularity, an \emph{independent} structural check (a
name-mover attends to and copies the IO token~\citep{wang2022ioi}) ranks the backups at $0.96$ ROC-AUC
while sharing none of \coax's machinery (Spearman $\rho=0.09$), corroborating the discovery rather than
restating it (the bespoke read does not transfer to other circuits; Appendix~\ref{app:discovery}).

\paragraph{How the backups take over.}
Beyond labels, we trace the hand-off directly (Figure~\ref{fig:mechanism}). As we ablate more of the
primary name-movers ($k=0,1,2,3$, strongest first), the discovered backups \emph{wake up} monotonically --
their output-norm ratio climbs from $1.00$ to $1.15$ and their conditional causal drop from $+0.05$ to
$+0.11$ -- while a matched random control stays flat. A direct-logit-attribution~\citep{nostalgebraist2020logitlens}
decomposition confirms a genuine hand-off: the primary name-movers carry large positive clean DLA to the
$\text{IO}-\text{S}$ direction ($+0.76$), and the backups' DLA to the answer more than doubles once the
primaries are ablated ($+0.07\!\to\!+0.21$). Finally, counterfactual patching~\citep{zhang2024patching}
closes the causal loop: ablating the primaries but \emph{freezing} the backups to their dormant value
removes $55\%$ of the self-repair, while freezing a matched random set removes none
(Figure~\ref{fig:mechanism}d). The backups' \emph{wake-up} causally drives the repair, not mere
correlation.

\begin{figure}[t]\centering
\includegraphics[width=0.96\textwidth]{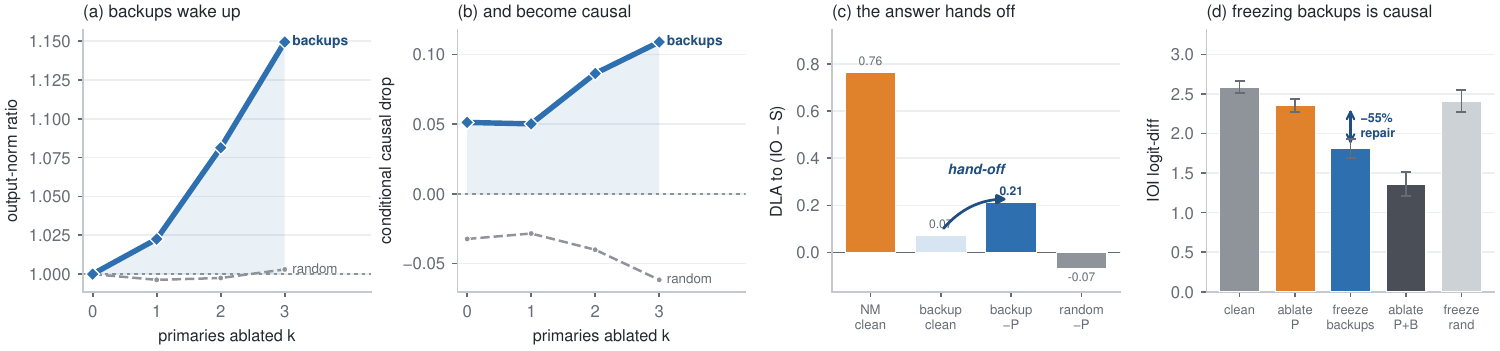}
\caption{\textbf{How the discovered backups take over (GPT-2-small).} As primaries are ablated ($k$), the
backups grow in output norm \textbf{(a)} and conditional causal effect \textbf{(b)} while random heads stay
flat; the answer's direct logit attribution hands off to them \textbf{(c)}; and freezing their dormant
activations removes $55\%$ of the self-repair \textbf{(d)}, confirming the wake-up is causal.}
\label{fig:mechanism}
\end{figure}

\subsection{Faithfulness, completeness, and minimality}\label{sec:fcm}
A discovered circuit is judged by three community-standard criteria~\citep{wang2022ioi}: it should be
\emph{faithful} (reproduce the behavior), \emph{complete} (contain every component the model uses), and
\emph{minimal} (every component is necessary). Self-repair is precisely a \emph{completeness} failure (it is
how the IOI backups were first noticed), so completeness is the criterion that most sharply separates
\coax from first-order discovery, and we report all three.

\paragraph{Completeness.} A circuit is complete if it reproduces the full model's \emph{response} to
ablating its members. We mean-ablate every head outside a circuit and measure how its IOI logit-difference
responds to ablating the name-movers, relative to the full model. The first-order
circuit (the documented IOI circuit \emph{without} its backups, which is what an additive method
recovers) is badly incomplete (gap $0.72$): it cannot reproduce the self-repair the model performs.
Completing it with the \coax backups closes the gap to $0.15$, matching the complete documented circuit
($0.16$), whereas a matched-random completion does not ($0.61$). The dose-response curve of
Figure~\ref{fig:teaser}c is the same completeness statement drawn as a trajectory.

\paragraph{Minimality check.} The recovered set is enriched for individually necessary compensators: once
the name-movers are ablated, the documented backups among the surfaced heads have positive mean conditional
logit-difference drop ($+0.21$) while non-backup surfaced heads average negative ($-0.12$,
\S\ref{sec:realbackups}). Individual-head drops are noisy (App.~\ref{app:discovery}), so we read this as
\emph{evidence} for minimality, not proof every selected head is necessary.

\paragraph{Faithfulness.} Counterfactual patching (Figure~\ref{fig:mechanism}d) gives the causal form of
faithfulness: the completed circuit's backups are the components the model actually \emph{uses} under
intervention -- freezing them to their dormant state removes over half of the self-repair, while freezing
random heads removes none.

\subsection{\coax as a completion module}\label{sec:completion}
In practice the primary seed comes from a first-order method, so the realistic test is whether \coax
\emph{completes} an automatically discovered circuit. Seeding it with the top-$3$ heads of AtP, EAP-IG, or
AtP$^\star$ and adding its top-$4$ backups roughly doubles the joint-ablation IOI logit-difference drop, far
above a matched-random control (AtP $1.28\!\to\!3.11$ vs random $1.69$; EAP-IG $0.68\!\to\!1.71$ vs $0.95$;
AtP$^\star$ $1.85\!\to\!3.73$ vs $2.25$; Table~\ref{tab:completion}), so \coax runs end-to-end and
label-free, not only on documented seeds. Two qualifications temper this. On \emph{raw} faithfulness it
merely ties extending the circuit by the finder's own next-$4$ heads ($+$own), though it recruits far
lower-ranked ones (mean first-order percentile $0.69$ vs $0.97$); what makes the \emph{identity} of the
heads matter, not their number, is the knockout of \S\ref{sec:apps}, where $+$own overshoots into the core
circuit ($0.24$) while \coax matches the documented set ($0.70$, Table~\ref{tab:unlearn}). And completion
quality tracks seed quality: under a noisy seed the recruited heads are mid-ranked compensators of
\emph{that} seed rather than the cleanly dormant backups recovered from documented primaries
(Appendix~\ref{app:completion}).

\paragraph{Same-circuit structure, and a built-in ablation.}
The same second-order signal also clusters \emph{same-circuit} heads (cluster-AUC, full table
Appendix~\ref{app:xscale}): pairwise synergy wins decisively on the information-movement circuits over both
input-side controls (duplicate-token $0.97$ vs.\ $0.32$/$0.59$; induction $0.94$ vs.\ $0.75$/$0.89$),
because these heads share aligned \emph{output effects} but neither correlated activations nor aligned
value subspaces, while co-activation wins the co-located writing heads -- the two lenses are complementary.
This doubles as an ablation of \coax: dropping the second-order term collapses name-mover clustering from
$0.76$ to $0.34$, so the gain is the interaction term, not merely output-grounded ablation.

\subsection{Generalization: scale and architecture}\label{sec:gen}
IOI is the only circuit with documented head-level backups, so we test that the discovery generalizes
along two axes (Figure~\ref{fig:generalization}): \emph{scale}, the same IOI circuit on larger models, and
\emph{architecture}, a second redundant circuit (induction) across model families.

\paragraph{Scale: backups replicate across the GPT-2 family.} We run the identical label-free
pipeline (detect the name-mover primaries by direct-logit attribution, recover compensators by
conditional growth, validate by mechanism) on GPT-2 medium and large, which carry \emph{no} backup
labels. On every size the recovered set wakes up when the primaries are ablated -- its output-norm ratio
is $1.15$, $1.05$, and $1.13$ (small, medium, large) versus $\approx\!1.00$ for the rest of the model
($\le\!0.01$ std over two seeds; Figure~\ref{fig:generalization}a) -- and is load-bearing only then
(Appendix~\ref{app:discovery}). The headline backup structure is not a GPT-2-small artifact.

\paragraph{Architecture: conditional completion transfers across families.} We apply \coax to
\emph{induction}, a second attention-mediated circuit known to be redundant across
heads~\citep{olsson2022induction}, with a \emph{fully} label-free pipeline that also detects the
primaries. On GPT-2-small, seeding the documented induction heads returns a compensating set that is
necessary only once the primaries are gone (conditional causal drop $0.89$ versus $0.05$ random); ablating
the primaries barely moves the induction log-probability (drop $0.27$), but adding the discovered
compensators drops it by $8.5$, about $10\times$ the matched-random control. This holds on \emph{eight} further
models spanning \emph{six} architecture families (Figure~\ref{fig:generalization}b), with attribution
factors of $2.1$ to $12\times$. A stronger $+$\emph{own} control (the model's own next-strongest induction
heads; Figure~\ref{fig:generalization}c) confirms the recovered set is load-bearing, but \coax and $+$own
are comparable here: induction's redundancy is shared among co-firing \emph{homogeneous} heads, with no
distinct backup class to single out, so we claim label-free \emph{recovery}, not unique identification
(Appendix~\ref{app:discovery}).

\begin{figure}[t]\centering
\includegraphics[width=\textwidth]{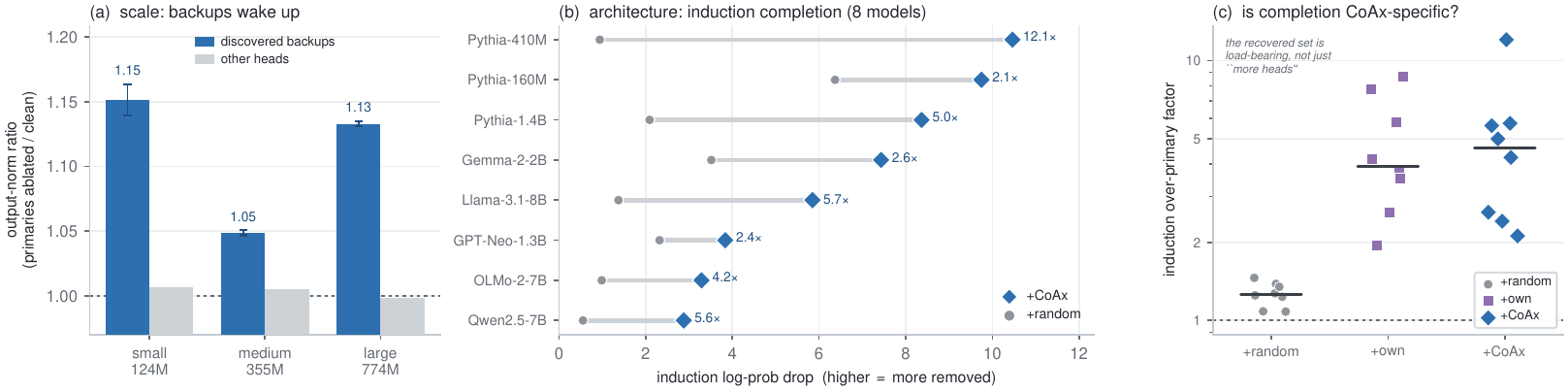}
\caption{\textbf{The backup discovery generalizes along two axes.} \textbf{(a)} \emph{Scale}: the
discovered IOI backups wake up under primary ablation across the GPT-2 family (blue, above the rest of the
model in grey). \textbf{(b)} \emph{Architecture}: label-free induction completion on eight models from six
families, where \coax drops the induction log-probability far more than matched-random. \textbf{(c)} A
$+$\emph{own} control: both $+$\coax and $+$own sit far above the random floor and are comparable, as expected when
induction's redundancy is shared among homogeneous heads (\S\ref{sec:gen}).}
\label{fig:generalization}
\end{figure}

\paragraph{Scope and regime.} \coax targets the harder \emph{dormant-substitution} regime, where a unit is
silent until a primary is removed (IOI) and first-order scoring provably fails (Prop.~\ref{prop:blind}). Where
redundancy is instead shared among units that already co-fire (induction is closer to this) the components
are not hidden, so even input-side co-activation finds them and \coax is complementary there, not a
replacement. At the other extreme the head-level signal does not transfer to the MLP-dominated greater-than
circuit~\citep{hanna2023greaterthan} (a preliminary FFN-group probe recovers only $1.5\times$ over random,
within one std), suggesting greater-than carries much weaker recoverable self-repair at this granularity --
a property of the circuit, not the unit (Appendix~\ref{app:ffn}). The recovery is robust to the primary
seed, the ablation value, and the prompt template, but the seed must be the \emph{functional} primary
circuit; full seed, template, and cross-model-geometry sweeps are in the appendix.

\section{Closing the Loop: Attribution, Knockout, and Pruning}\label{sec:apps}
A blind spot in component importance does not stay confined to discovery. Attribution, knockout, and
pruning all rank components by the same node-additive score, so each inherits the same error wherever a
circuit is redundant: it under-credits the primary whose damage is repaired and overlooks the backup that
repairs it. We close the loop by feeding the recovered backups -- the output of \coax, computed once and
label-free -- back into all three pipelines. That one set serves all three: it restores the causal effect
self-repair had hidden from attribution, supplies exactly the compensators a capability knockout must also
remove, and yields a pruning order that keeps a backup once its primary is gone. The paper's thesis is in
this sense operational and not merely diagnostic: conditioning on what has already been removed repairs the
measurements the blind spot had distorted.

\begin{table}[t]\centering\small
\setlength{\tabcolsep}{6pt}
\caption{\textbf{Attribution recovery} (GPT-2-small IOI logit-difference drop from ablating the name-mover
primaries plus a top-up set, $4$ seeds; clean $2.53$). The label-free \coax backups recover the masked
causal effect, \best{exceeding} the matched-random and documented-backup top-ups.}
\label{tab:attr}
\begin{tabular}{lcccc}
\toprule
\rowcolor{hdr}
top-up set & prim.\ only & $+$rand & $+$doc. & $+$\coax \\
\midrule
\rowcolor{ourrow}
logit-difference drop & $0.22$ & $1.0{\pm}0.7$ & $1.15$ & \best{$1.76$} \\
\bottomrule
\end{tabular}
\end{table}

\subsection{Attribution: recovering the masked causal effect}
Attribution asks how much a set of heads is worth, read off as the behavioral change when they are ablated.
On the IOI name-movers that question returns the wrong answer: ablating the primaries alone drops the
logit-difference by only $0.22$ (four-seed mean, clean $2.53$; the single-seed value is $0.11$, reconciled
in Appendix~\ref{app:attribution}), because their backups absorb the damage. Re-attributing the same set
together with the \coax backups recovers a $1.76$ drop (Table~\ref{tab:attr}) -- the effect the redundancy
had hidden. We benchmark this against the two controls that matter, rather than the inflated ratio over the
over-masked baseline: it exceeds a matched random top-up ($1.0{\pm}0.7$) and even the curated documented
backups ($1.15$) at every seed. The margin over the documented set is small but consistent, which we read
as a hint that the hand-annotated list does not exhaust the functional compensators -- the extra surfaced
heads carry the same backup signature (Appendix~\ref{app:discovery}) -- rather than as noise.

\begin{figure}[t]\centering
\includegraphics[width=\textwidth]{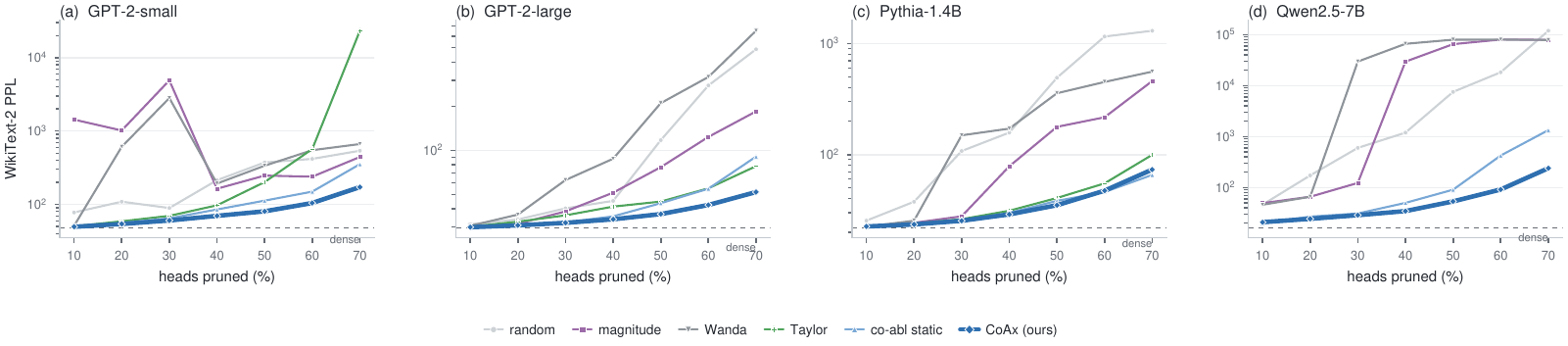}
\caption{\textbf{Repair-aware pruning across scales} (WikiText-2 perplexity vs.\ heads pruned, log).
\textbf{(a--d)} On four models from $124$M to $7$B, classical and gradient baselines degrade sharply while
the self-repair-aware co-ablation order (blue) stays nearest dense at every scale; the knockout payoff of the
same score is in Figure~\ref{fig:teaser}c. Zero-shot accuracy and the full sweep:
Appendix~\ref{app:pruning}.}
\label{fig:apps}
\end{figure}

\begin{table}[t]\centering\small
\setlength{\tabcolsep}{6pt}
\caption{\textbf{Capability removal} (GPT-2-small IOI accuracy under ablation, mean over $4$ seeds;
per-seed in Appendix~\ref{app:unlearn}). Ablating the primary circuit leaves the behavior intact
(self-repair); adding the label-free \coax backups removes it and \best{matches the documented oracle}
($0.70$ vs \secnd{$0.72$}), whereas a first-order top-up ($+$own) \emph{overshoots} to $0.24$ into the core
heads.}
\label{tab:unlearn}
\begin{tabular}{lcccccc}
\toprule
\rowcolor{hdr}
ablated set & clean & $-$prim. & $+$\coax & $+$own & $+$rand & $+$doc. \\
\midrule
\rowcolor{ourrow}
IOI accuracy & $1.00$ & $0.97$ & \best{$0.70$} & $0.24$ & $0.81$ & \secnd{$0.72$} \\
\bottomrule
\end{tabular}
\end{table}

\subsection{Circuit knockout: a capability needs its backups removed}
Knockout is the mirror image of attribution: to disable a behavior one must remove its backups too, or
self-repair restores it. Ablating the documented name-mover primaries -- the heads a first-order analysis
would call ``the circuit'' -- barely dents IOI accuracy ($1.00\!\to\!0.97$); the behavior survives, fully
masked (Table~\ref{tab:unlearn}, $4$ seeds). Adding the label-free \coax backups is what brings accuracy
down to $0.70$, matching the documented-backup oracle ($0.72$). What turns this into a statement about
\emph{which} heads rather than how many is the first-order top-up: extending the ablation by the same number
of the model's own next-ranked heads overshoots to $0.24$, cutting past the backups into the core
name-movers (Figure~\ref{fig:teaser}c). \coax recovers the specific compensators a complete knockout needs,
not merely more of them. Accuracy need not fall to chance -- IOI retains redundancy beyond the name-mover
family -- so the informative quantity is the ordering across sets, not the absolute floor.

\subsection{Pruning: a repair-aware removal order}
A one-shot pruner scores every head independently, so self-repair hides a redundant group's \emph{joint}
importance: a primary and its backup each rate low on their own, the group is pruned together, and the
behavior it carried collapses. The fix is again conditioning. We prune sequentially, re-measuring the \coax
score after each removal, so a backup's importance rises the moment its primary leaves and the pruner keeps
it. The benefit comes in two layers (Figure~\ref{fig:apps}a--d; full sweep Appendix~\ref{app:pruning}).
First, the co-ablation energy is already a strong standalone head score: across four models from $124$M to
$7$B it beats every weight-, magnitude-, and gradient-based baseline, including the strong gradient-Taylor
order wherever Taylor is defined ($50\%$-sparsity perplexity $80.6$ vs $201.4$ on GPT-2-small, with the
margin holding on GPT-2-large and Pythia-1.4B). Second, re-measuring sequentially adds a further
self-repair-aware gain that widens with sparsity ($80.6$ vs $112.6$ for the static order); because the two
orders share the identical signal and differ only in the conditioning, this increment isolates the value of
conditioning itself. The dominant effect is the better head score, which the sequential pass then refines -- the
same conditional principle, now used to retain the very components a behavior depends on. Zero-shot
accuracy tells the same story (Appendix~\ref{app:pruning}).

\section{Related Work}\label{sec:related}
\paragraph{Most automated discovery relies on additive scores.}
ACDC~\citep{conmy2023acdc}, attribution and edge patching~\citep{syed2023attribution},
EAP-IG~\citep{hanna2024faith}, and AtP$^\star$~\citep{kramar2024atp} score a component by its single-node
effect and select greedily; \citet{conmy2023acdc} already observe that this systematically misses the
negative and compensating components a circuit relies on. A newer wave (sparse feature
circuits~\citep{marks2024sparse}, information-flow routes~\citep{ferrando2024information}, contextual
decomposition~\citep{hsu2025cdt}, and edge pruning~\citep{bhaskar2024edge}) refines the search but still
selects through additive or locally linear scores, and so inherits the same blind spot. Our pairwise
synergy $I_{uv}$ is the second-order correction these methods omit: \coax beats EAP-IG (strong on
MIB~\citep{mueller2025mib}) and AtP$^\star$ on backup recovery while matching them on the non-redundant
primaries, so it adds the missing axis without giving up the one they already have.

\paragraph{Self-repair and redundancy.}
The Hydra effect~\citep{mcgrath2023hydra}, IOI backup name-movers~\citep{wang2022ioi},
copy-suppression~\citep{mcdougall2023copysuppression}, and anti-erasure mechanisms~\citep{rushing2024selfrepair}
are established by manual knock-out-and-inspect. Methods that \emph{automate} around self-repair either
\emph{diagnose} it (\citet{ye2026hiddenheroes} show joint ablation does up to an order of magnitude more
damage than summed single ablations) or \emph{de-bias a node's own score} to undo the masking
(GIM~\citep{edin2025gim}; AtP$^\star$ GradDrop~\citep{kramar2024atp}). Both keep importance node-additive,
so neither names the compensating heads; \coax does, and confirms them causally. A fair GIM that shares our
seed still reaches only $0.63$ versus $0.91$, which pins the gap to the node-additive \emph{form} rather
than to conditioning.

\paragraph{Second-order and interaction-aware importance.}
Second-order structure underlies weight pruning, from Optimal Brain Damage~\citep{lecun1989obd} to the
Fisher framework of~\citet{kwon2022fastpruning}; inference-time pruners select pathways from
representation--parameter probes~\citep{zhiren2026subspacepath}, an input-side signal complementary to our
output-grounded order. Neuron Shapley~\citep{ghorbani2020neuronshapley} accounts for interactions but by
combinatorial sampling, not a closed-form output-grounded kernel. Closest in vocabulary is the concurrent
synergistic core~\citep{urbinarodriguez2026synergistic}, which ranks heads by the information-theoretic
synergy of their \emph{activation} statistics, an input-side and correlational notion. This input-side
family is strong on IOI, where the backups co-fire with the name-movers (co-activation alone reaches $0.92$
AUC), so we do not claim \coax is the only signal that finds them. But being correlational it gives no
causal or patching validation, collapses on movement circuits, and over-ablates when used to complete a
circuit (\S\ref{sec:bdisc}, Appendix~\ref{app:coact}). \coax is the opposite kind of measurement -- causal
and output-grounded -- which is precisely what lets it carry the downstream attribution, knockout, and
pruning results that a correlational ranking cannot.

\section{Conclusion}
Self-repair shows that a component's importance is not a property it carries alone, but one that surfaces
only in relation to the rest of the circuit: the heads that matter most are often those that do nothing
until another fails. Once we stop scoring units in isolation and instead ask how each responds when the
primary set is removed, the redundancy that hides from first-order analysis becomes something we can name,
rank, and intervene on. \coax makes that shift concrete and cheap, turning a known or discovered primary
circuit into the explicit backup set behind it, and with it repairing the attribution, knockout, and
pruning pipelines the blind spot had quietly corrupted. Faithful interpretability of a robust model, we
argue, must therefore be \emph{conditional}: redundancy is not noise to be averaged out but structure to be
conditioned on.

\clearpage


\bibliography{coablation}
\bibliographystyle{iclr2025_conference}

\clearpage
\appendix

\section*{Appendix Overview}
This appendix provides the material supporting the main text: the method details and proofs, the full
experimental protocol, and the complete per-model and per-seed results behind every reported number. It is
organized by topic, in the order the main text introduces them.

\begin{itemize}
\item \textbf{Appendix~\ref{app:method}, Method details.} In three parts: the definitions, Fisher
geometry, and proofs; the algorithm, its cost, efficiency, and implementation; and the hyperparameters
with the ablation-value and calibration-efficiency robustness studies.
\item \textbf{Appendix~\ref{app:setup}, Experimental setup.} Models, calibration and evaluation data,
circuit ground truth, metric definitions, and the full baseline roster with implementation notes.
\item \textbf{Appendix~\ref{app:discovery}, Discovery, full results.} In five parts: (i) statistical
significance and controls (per-seed AUC, permutation / hypergeometric / DeLong tests, the synthetic
benchmark, and the co-activation control); (ii) evidence the surfaced heads are genuine backups (wake-up
and hand-off, the signature filter, a single-head case study, and circuit visualizations); (iii) what the
score keys on (the feature ablation and the module's non-additivity); (iv) generalization, completion, and
scope (induction across scales and architectures, completing automatically-discovered circuits, and the
greater-than boundary); and (v) robustness (primary seed and prompt template).
\item \textbf{Appendix~\ref{app:attribution}, Attribution and removal, full results.} Per-seed attribution
correction, the relation to behavior-faithfulness metrics, and the per-seed capability-removal accuracy.
\item \textbf{Appendix~\ref{app:pruning}, Pruning and cross-scale geometry, full results.} All sparsities,
models, and tasks; the Taylor baseline; per-seed downstream accuracy; the module-partition negative; and
the per-model \textsc{vs-active} geometry numbers for all twelve models.
\item \textbf{Appendix~\ref{app:limitations}, Limitations, broader impact, and reproducibility.} An honest
account of scope, societal impact, and the code map, commands, and compute for reproduction.
\end{itemize}

\section{Method Details}\label{app:method}

\subsection{Definitions, geometry, and proofs}

\subsubsection{Terminology}\label{app:terms}
We fix the vocabulary used throughout. \textbf{Primary seed $\Sset$}: the components whose removal
triggers compensation, supplied by the analyst or by a first-order discovery method. \textbf{Backup
candidate $u$}: a unit whose conditional ablation effect increases after $\Sset$ is removed.
\textbf{Compensating set $\mathcal{B}$}: the top-ranked backup candidates \coax returns, optionally
filtered by the mechanistic signatures. \textbf{Conditional growth}: the increase in Fisher ablation
energy of a unit once $\Sset$ is ablated, i.e.\ the \coax score $\comp_u(\Sset)$. \textbf{Pairwise
synergy} $I_{uv}$: the symmetric non-additivity of jointly ablating two units. \textbf{Circuit
completion}: adding a compensating set to a first-order-discovered primary circuit. We say
``second-order'' in the intervention sense (removing a set $\neq$ summing its members), and reserve
``backup/compensator'' for candidates that also pass a behavioral sign or signature check.

\paragraph{Notation.} Table~\ref{tab:notation} collects every symbol used in the method and proofs.

\begin{table}[ht]\centering\footnotesize
\setlength{\tabcolsep}{4pt}
\caption{Notation used throughout the paper.}
\label{tab:notation}
\begin{tabular}{@{}ll@{}}
\toprule
\rowcolor{hdr}
symbol & meaning \\
\midrule
$u,v$ & structured units (attention heads in the main experiments) \\
$\Uset$ & the set of all candidate units \\
$\Sset$ & primary seed set conditioned on (Def.~\ref{def:coax}) \\
$M$ & an ablated set of units; $z_M$ the logits with $M$ ablated \\
$z_0\!\equiv\!z_\emptyset$ & clean logits; $p_0=\mathrm{softmax}(z_0)$ the clean distribution \\
$\dz_u = z_0-z_{\{u\}}$ & single-unit ablation effect (logit perturbation) \\
$\dz_{u\mid\Sset}=z_\Sset-z_{\Sset\cup\{u\}}$ & conditional ablation effect given $\Sset$ \\
$\dz_{uv}=z_0-z_{\{u,v\}}$ & joint (pair) ablation effect \\
$F=\mathrm{diag}(p_0)-p_0p_0^\top$ & Fisher information of the output (Hessian of $\mathrm{KL}$ at $z_0$) \\
$\widetilde{\dz}_u=\sqrt{p_0}\odot(\dz_u-\mathbb{E}_{p_0}[\dz_u]\mathbf{1})$ & centered Fisher-weighted feature (top-$r$ logits) \\
$\Hkern=\tfrac1P\bar D^\top\bar D$ & Fisher Gram (co-ablation curvature) kernel; PSD \\
$A_{uv}=\Hkern_{uv}/\sqrt{\Hkern_{uu}\Hkern_{vv}}$ & Fisher-cosine affinity $\in[-1,1]$ (first-order baseline) \\
$\energy(\dz_u\mid\Sset)=\mathbb{E}_{x,t}\|\widetilde{\dz}_{u\mid\Sset}\|^2$ & mean KL energy of ablating $u$ given $\Sset$ \\
$\comp_u(\Sset)=\energy(\dz_u\mid\Sset)-\energy(\dz_u)$ & \coax conditional-growth score (Eq.~\ref{eq:comp}) \\
$I_{uv}=\dz_{uv}-\dz_u-\dz_v$ & pairwise synergy (non-additive part); $S_{uv}=\mathbb{E}\|\widetilde I_{uv}\|^2$ \\
$r,\,P$ & top-$r$ logit truncation ($r{=}192$); number of calibration positions \\
\bottomrule
\end{tabular}
\end{table}

\subsubsection{Fisher-weighted co-ablation features}
For a frozen model with dense next-token distribution $p_0$ at a calibration position, ablating unit
$u$ changes the logits to $z_{\{u\}}$, giving $\dz_u = z_0 - z_{\{u\}}$. The Fisher information of the
categorical output, $F = \mathrm{diag}(p_0) - p_0 p_0^\top$, is the local second-order form of
$\mathrm{KL}(p_0 \,\|\, p_{\{u\}})$, so $\tfrac12\,\dz_u^\top F\, \dz_u$ is, to second order, the KL cost of the
ablation. We work with the centered, Fisher-weighted feature
$\widetilde{\dz}_u = \sqrt{p_0}\odot(\dz_u - \mathbb{E}_{p_0}[\dz_u]\mathbf{1})$ on the top-$r$ logits
($r{=}192$), making $\langle \widetilde{\dz}_u, \widetilde{\dz}_v\rangle$ a Fisher inner product and the
Gram matrix $\Hkern$ positive semidefinite. The centering removes the shared logit-shift direction
that otherwise inflates every pairwise affinity; the design ablation in Appendix~\ref{app:discovery} shows
it is the load-bearing ingredient.

\subsubsection{Proofs}\label{app:proofs}
Throughout, a position contributes a clean logit vector $z_0$ with $p_0=\mathrm{softmax}(z_0)$, and
$\overline{a}:=\mathbb{E}_{p_0}[a]=\sum_i (p_0)_i a_i$ for a vector $a$. The feature map
$a\mapsto\widetilde a=\sqrt{p_0}\odot(a-\overline a\mathbf 1)$ is linear in $a$.

\begin{proof}[Proof of Proposition~\ref{prop:fisher}]
For two effects $\dz_u,\dz_v$,
\[
\langle\widetilde{\dz}_u,\widetilde{\dz}_v\rangle
=\sum_i (p_0)_i(\dz_{u,i}-\overline{\dz_u})(\dz_{v,i}-\overline{\dz_v})
=\sum_i (p_0)_i \dz_{u,i}\dz_{v,i}-\overline{\dz_u}\,\overline{\dz_v},
\]
using $\sum_i (p_0)_i=1$ and $\sum_i (p_0)_i \dz_{u,i}=\overline{\dz_u}$. On the other hand
$\dz_u^\top F\dz_v=\dz_u^\top(\mathrm{diag}(p_0)-p_0p_0^\top)\dz_v=\sum_i (p_0)_i\dz_{u,i}\dz_{v,i}
-\overline{\dz_u}\,\overline{\dz_v}$, so the two are equal. Thus $\Hkern=\tfrac1P\bar D^\top\bar D$ is a
Gram matrix in this inner product and is positive semidefinite, and
$\energy(\dz_u)=\mathbb{E}_{x,t}\langle\widetilde{\dz}_u,\widetilde{\dz}_u\rangle
=\mathbb{E}_{x,t}[\dz_u^\top F\dz_u]$. Finally $F$ is the Hessian of $z\mapsto\mathrm{KL}(p_0\,\|\,
\mathrm{softmax}(z))$ at $z=z_0$ (the standard softmax Fisher), so a second-order Taylor expansion gives
$\mathrm{KL}(p_0\,\|\,\mathrm{softmax}(z_0-\dz_u))=\tfrac12\dz_u^\top F\dz_u+O(\|\dz_u\|^3)$.
\end{proof}

\begin{proof}[Proof of Proposition~\ref{prop:blind}]
Let $s(u)=g(\theta_u)$ with $\theta_u$ a per-unit statistic of the clean forward/backward pass, and
suppose $\theta$ is invariant between a pure backup and an inert unit. A pure backup $b$ has $\dz_b=0$ by
definition and zero clean activation (it is dormant); an inert unit has the same. Hence
$\theta_b=\theta_{\text{inert}}$ and $s(b)=g(\theta_b)=g(\theta_{\text{inert}})=s(\text{inert})$, so no
such score ranks $b$ above an inert unit. By contrast
$\comp_b(\Sset)=\energy(\dz_b\mid\Sset)-\energy(\dz_b\mid\emptyset)=\Delta-0=\Delta>0$ while
$\comp_{\text{inert}}(\Sset)=0$, so \coax strictly separates them.

\emph{Which methods satisfy the hypothesis, and how tightly.} Single-ablation saliency uses
$\theta_u=\dz_u$, exactly invariant for a pure backup. Gradient attributions (AtP, EAP-IG) and
AtP$^\star$ weight a unit by clean activations and (clean or GradDrop-perturbed) gradients evaluated at
the clean state; for a \emph{perfectly} dormant backup these are zero, matching an inert unit. The
condition is therefore an idealization: real backups are only near-dormant (clean activation ratio
$\approx1.0$), so these methods reach non-chance but sub-\coax AUC (Table~\ref{tab:backup}) rather than
exactly chance. We do not claim the condition holds for weight-magnitude scores, which can rank a dormant
but large-weight unit highly; empirically magnitude/Wanda are nonetheless poor backup scores.
\end{proof}

\begin{proof}[Proof of Proposition~\ref{prop:synergy}]
For a set $M$, expand the joint ablation effect into single and interaction terms,
$\dz_M=\sum_{u\in M}\dz_u+\sum_{\{u,v\}\subseteq M} I_{uv}+R_M$, where $R_M$ collects interactions of
order $\geq 3$ (this is the M\"obius expansion of the set function $M\mapsto\dz_M$, and $I_{uv}=
\dz_{uv}-\dz_u-\dz_v$ recovers the definition for $|M|=2$). The pairwise truncation sets $R_M=0$. Since
$z_M=z_0-\dz_M$, the conditional effect is
\[
\dz_{u\mid\Sset}=z_\Sset-z_{\Sset\cup\{u\}}=\dz_{\Sset\cup\{u\}}-\dz_\Sset
=\dz_u+\sum_{s\in\Sset} I_{su},
\]
because the terms of $\dz_{\Sset\cup\{u\}}$ not already in $\dz_\Sset$ are exactly $\dz_u$ and the pairs
$\{s,u\}$ for $s\in\Sset$. Applying the linear feature map, $\widetilde{\dz}_{u\mid\Sset}
=\widetilde{\dz}_u+\sum_{s}\widetilde{I}_{su}$, so
\[
\comp_u(\Sset)=\mathbb{E}_{x,t}\!\left[\|\widetilde{\dz}_u+\textstyle\sum_s\widetilde I_{su}\|^2-\|\widetilde{\dz}_u\|^2\right]
=\mathbb{E}_{x,t}\!\left[2\langle\widetilde{\dz}_u,\textstyle\sum_s\widetilde I_{su}\rangle+\|\textstyle\sum_s\widetilde I_{su}\|^2\right].
\]
If $u$ is dormant, $\dz_u\approx0\Rightarrow\widetilde{\dz}_u\approx0$ and the cross term vanishes,
leaving $\comp_u(\Sset)\approx\mathbb{E}_{x,t}\|\sum_{s\in\Sset}\widetilde I_{su}\|^2\geq0$.
\end{proof}

\subsection{Algorithm, cost, and implementation}

\subsubsection{Algorithm and complexity}
\begin{algorithm}[ht]
\caption{\coax backup discovery}
\label{alg:cond}
\begin{algorithmic}[1]
\STATE \textbf{Input:} frozen model, calibration set $\mathcal{X}$, primary seed $\Sset$, units $\Uset$
\STATE compute dense top-$r$ logits $z_0(x,t)$ for all positions
\STATE $b_0[u] \leftarrow \energy(\dz_u \mid \emptyset)$ for all $u$ \COMMENT{single-ablation baseline}
\STATE ablate $\Sset$; recompute logits $z_\Sset$
\FOR{$u \in \Uset \setminus \Sset$}
  \STATE ablate $\Sset \cup \{u\}$; $e_u \leftarrow \energy(\dz_u \mid \Sset)$
  \STATE $\comp_u(\Sset) \leftarrow e_u - b_0[u]$
\ENDFOR
\STATE \textbf{return} units ranked by $\comp_u(\Sset)$
\end{algorithmic}
\end{algorithm}

The single-unit features and one conditional pass are each $O(|\Uset|)$ forward passes. The explicit
pairwise synergy is $O(|\Uset|^2)$ and is restricted to a candidate set for large models. The greedy
self-repair-aware pruning order of Appendix~\ref{app:pruning} amortizes the same forwards: at each step it
re-uses the conditional baseline and evaluates the marginal energy of every remaining unit
(Algorithm~\ref{alg:prune}).

\begin{algorithm}[ht]
\caption{Self-repair-aware sequential head pruning}
\label{alg:prune}
\begin{algorithmic}[1]
\STATE \textbf{Input:} frozen model, calibration set $\mathcal{X}$, units $\Uset$, budget $B$
\STATE $\mathcal{P} \leftarrow \emptyset$ \COMMENT{pruned set; conditioning on what is already removed}
\WHILE{$|\mathcal{P}| < B$}
  \FOR{$u \in \Uset \setminus \mathcal{P}$}
    \STATE $s[u] \leftarrow \energy(\dz_u \mid \mathcal{P})$ \COMMENT{conditional ablation energy given the pruned set}
  \ENDFOR
  \STATE $u^\star \leftarrow \arg\min_{u} s[u]$ \COMMENT{least load-bearing given what is gone}
  \STATE $\mathcal{P} \leftarrow \mathcal{P} \cup \{u^\star\}$
\ENDWHILE
\STATE \textbf{return} prune $\mathcal{P}$, keep $\Uset \setminus \mathcal{P}$
\end{algorithmic}
\end{algorithm}
The \emph{static} co-ablation order scores once with $\energy(\dz_u\mid\emptyset)$ (the diagonal of
$\Hkern$) and never re-measures; the two orders share the identical signal and differ only in the
conditioning on $\mathcal{P}$, which is what isolates the self-repair-aware gain in
Appendix~\ref{app:pruning}.

\subsubsection{Cost}
\coax adds a single conditional pass to the single-ablation pass, both $O(|\Uset|)$ forward passes and no
backward passes. Table~\ref{tab:scal} reports the resulting wall-clock time and peak memory on one GPU: from
a few seconds on GPT-2-small to about two minutes on a $1.3$B model, with memory dominated by the frozen
model itself.
\begin{table}[ht]\centering\small
\setlength{\tabcolsep}{4pt}
\caption{Cost of \coax on one GPU with $48$ calibration prompts: single-unit pass and one conditional
pass, both $O(|\Uset|)$ forwards; seconds and peak memory.}
\label{tab:scal}
\begin{tabular}{lcccc}
\toprule
\rowcolor{hdr}
model & units & single & cond. & GB \\
\midrule
GPT-2-small   & 144 & 2.9  & 5.7   & 0.8 \\
GPT-2-medium  & 384 & 19   & 39    & 1.7 \\
Pythia-410m   & 384 & 18   & 37    & 1.9 \\
Gemma-2-2b    & 208 & 18   & 35    & 6.0 \\
GPT-Neo-1.3b  & 384 & 64   & 128   & 5.6 \\
\bottomrule
\end{tabular}
\end{table}

\subsubsection{Efficiency}\label{app:efficiency}
We do not claim \coax is the cheapest score -- a single gradient pass costs less -- but its efficiency
profile is favorable in the dimensions that matter for a discovery primitive (Figure~\ref{fig:efficiency}).
\textbf{(i) Forward-only and label-free.} \coax needs no backward pass and no task gradient: it is
$2|\Uset|{+}1$ forward passes per seed ($|\Uset|$ for the single-ablation baseline, $|\Uset|$ for the
conditional pass, one for the seed-ablated reference), so it parallelizes trivially and never differentiates
the model. The gradient baselines (AtP, EAP-IG, AtP$^\star$) each need a backward pass and a task metric,
yet top out at $0.82$. \textbf{(ii) The conditional form avoids a quadratic cost.} The second-order
signal is, in its explicit form, the pairwise synergy $I_{uv}$ over all pairs -- $O(|\Uset|^2)$ forward
passes ($\approx\!10^4$ for GPT-2-small). The conditional score $\comp_u(\Sset)$ recovers the same
backup-revealing information at $O(|\Uset|)$ by conditioning on the seed once rather than enumerating
pairs, a $\sim\!36\times$ saving at this scale that grows with model size. \textbf{(iii) Data-cheap.}
Because the score is calibration-only, it is also cheap in \emph{data}: $32$ unlabeled prompts already
reach $0.90$ backup AUC (Figure~\ref{fig:method_analysis}a). Wall-clock and memory for the two passes are
in Table~\ref{tab:scal}; on GPT-2-small the full discovery is a few seconds on one GPU.

\begin{figure}[ht]\centering
\includegraphics[width=0.66\textwidth]{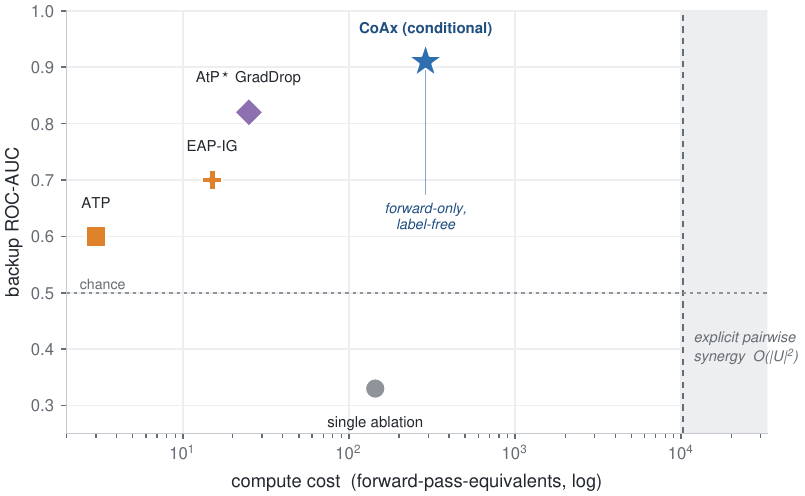}
\caption{\textbf{Backup-recovery quality vs.\ compute cost} (forward-pass-equivalents; one backward $=$ two
forwards; GPT-2-small, $|\Uset|{=}144$, $L{=}12$). \coax is the only score that reaches $0.91$, is
forward-only and label-free, and -- crucially -- its conditional formulation recovers the second-order
backup signal at $O(|\Uset|)$, far to the left of the $O(|\Uset|^2)$ wall of the explicit pairwise synergy
it replaces. Gradient baselines are cheaper per call but require a backward pass and a task gradient and do
not exceed $0.82$.}
\label{fig:efficiency}
\end{figure}

Table~\ref{tab:efficiency} gives the per-method cost breakdown behind Figure~\ref{fig:efficiency}: forward
and backward passes (symbolically and for GPT-2-small, $|\Uset|{=}144$, $L{=}12$), forward-pass-equivalents
(FPE; one backward $=$ two forwards), whether the score is label-free, and its backup AUC. The pattern is
that \coax buys the only $0.91$ with forward passes alone and no labels, and its conditional route is
$\sim\!36\times$ cheaper than the explicit pairwise synergy that carries the same second-order signal.

\begin{table}[ht]\centering\small
\setlength{\tabcolsep}{5pt}
\caption{Compute cost per backup-scoring method (GPT-2-small). FPE $=$ forwards $+ 2\times$ backwards.
\coax is forward-only and label-free; its conditional formulation avoids the $O(|\Uset|^2)$ explicit
pairwise route while reaching the highest AUC.}
\label{tab:efficiency}
\begin{tabular}{lccccc}
\toprule
\rowcolor{hdr}
method & forwards & backwards & FPE & label-free & backup AUC \\
\midrule
single ablation \scriptsize{(1st)}      & $|\Uset|$        & $0$   & $144$    & \checkmark & $0.33$ \\
AtP \scriptsize{(1st)}                    & $1$              & $1$   & $3$      & ---        & $0.60$ \\
EAP-IG \scriptsize{(1st)}                 & $5$              & $5$   & $15$     & ---        & $0.70$ \\
AtP$^\star$ GradDrop \scriptsize{(1st)}   & $1$              & $L$   & $25$     & ---        & $0.82$ \\
explicit pairwise synergy \scriptsize{(2nd)} & $|\Uset|^2/2$ & $0$   & ${\sim}10^4$ & \checkmark & ---$^\dagger$ \\
\rowcolor{ourrow}
\coax conditional \scriptsize{(2nd, ours)} & $2|\Uset|{+}1$  & $0$   & $289$    & \checkmark & \best{$0.91$} \\
\bottomrule
\end{tabular}
\end{table}

\noindent{\footnotesize $^\dagger$The explicit pairwise route carries the same second-order signal as the
conditional \coax score (Proposition~\ref{prop:synergy}) but enumerates all pairs; \coax recovers it at
$O(|\Uset|)$ by conditioning on the seed once, which is the efficiency point.}

\subsubsection{Implementation details}
We make the score precise to support reproduction.
\begin{itemize}
\item \textbf{Energy.} $\energy(\dz_u\mid\Sset)=\mathbb{E}_{x,t}\,\|\widetilde{\dz}_{u\mid\Sset}\|^2$ is
the mean squared Fisher norm of the conditional perturbation, averaged first over token positions
within a prompt and then over prompts (equal weight per prompt).
\item \textbf{Top-$r$ logits.} The top-$r$ index set is taken once from the \emph{clean} distribution
$p_0$ and held fixed across all ablated passes, so features live in a common space.
\item \textbf{Seed ablation.} The primary set $\Sset$ is ablated \emph{jointly} (all its heads zeroed
together), not seeded one at a time, and the conditional baseline $z_\Sset$ is shared across all
units to make the pass $O(|\Uset|)$.
\item \textbf{Self-exclusion.} The score is computed only for $u\notin\Sset$.
\item \textbf{Ranking.} We rank by $\comp_u(\Sset)$ directly; no ReLU or thresholding. Negative values
(units that \emph{lose} effect once $\Sset$ is gone) simply rank low.
\item \textbf{GQA units.} Under grouped-query attention, a unit is a KV-aligned query group, so
ablation respects the shared key-value head.
\end{itemize}

\subsection{Hyperparameters and robustness}

\subsubsection{Hyperparameters}
Top-$r$ logits $r{=}192$ (backup AUC stable within $0.01$ for $r\in\{96,192,384\}$); $96$
IOI-exercised prompts for discovery; $48$ to $64$ calibration windows for pruning. Zero ablation by
default. Pruning interaction strength $\lambda{=}1.0$, reduced to about $0.5$ for GQA models where FFN
units over-correlate. Primary set size is the documented primary count (three name-movers for IOI,
four induction heads); results are stable for seed sizes $2$ to $3$ (Appendix~\ref{app:seedrobust}).
Table~\ref{tab:hparams} consolidates every hyperparameter and random seed.

\begin{table}[ht]\centering\small
\setlength{\tabcolsep}{5pt}
\caption{All \coax hyperparameters and protocol constants, consolidated. Values are shared across
experiments unless a section states otherwise.}
\label{tab:hparams}
\begin{tabular}{ll}
\toprule
\rowcolor{hdr}
knob & value \\
\midrule
top-$r$ logits & $192$ (stable for $r\!\in\!\{96,192,384\}$) \\
discovery prompts & $96$ IOI-exercised \\
calibration windows (pruning) & $48$--$64$ \\
ablation value & zero (default; variants in Tab.~\ref{tab:ablrobust}) \\
interaction strength $\lambda$ & $1.0$ ($0.5$ for GQA models) \\
primary seed $|\Sset|$ & documented count ($3$ name-movers, $4$ induction) \\
FFN-group size & $96$ groups (GPT-2-small) \\
EAP-IG integration steps & $5$ \\
prompt seeds & $\{42,1,8,22\}$ \\
random-control draws & $20$ (attribution / knockout) \\
permutation shuffles & $10{,}000$ \\
synthetic trials & $40$ \\
\bottomrule
\end{tabular}
\end{table}

\subsubsection{Robustness to the ablation value}\label{app:ablrobust}
Faithfulness conclusions can depend on how a component is ablated~\citep{miller2024faithfulness}, so we
recompute the backup-discovery AUC under four ablation values for the \emph{same} \coax score
(Table~\ref{tab:ablrobust}): zero, mean (the expectation of resample), a single real resample activation
per head, and Gaussian noise matched to each head's per-dimension mean and standard deviation. The
compensation AUC stays in $0.87$--$0.92$ across all four while the single-ablation saliency stays at
$0.33$--$0.39$: the \coax ranking is a property of the conditional geometry, not of one ablation choice.

\begin{table}[ht]\centering\small
\setlength{\tabcolsep}{8pt}
\caption{Backup-discovery AUC under four ablation values (GPT-2-small). \coax (conditional compensation)
is stable; first-order single-ablation saliency is at chance throughout.}
\label{tab:ablrobust}
\begin{tabular}{lcccc}
\toprule
\rowcolor{hdr}
 & zero & mean & resample & Gaussian \\
\midrule
single ablation \scriptsize{(1st)}   & 0.33 & 0.33 & 0.39 & 0.35 \\
\rowcolor{ourrow}
\coax \scriptsize{(ours)}            & \best{0.91} & \best{0.92} & \best{0.88} & \best{0.87} \\
\bottomrule
\end{tabular}
\end{table}

\subsubsection{Method analysis: efficiency, ablation, and when \coax wins}\label{app:caleff}
Figure~\ref{fig:method_analysis} collects three diagnostics of the score.
\textbf{(a) Data efficiency.} Because \coax is calibration-only, a practical question is how many prompts
it needs: it is already strong at $32$ unlabeled prompts ($0.90$ AUC) and saturates by $\sim\!64$ (even
$16$ reach $0.86$), so label-free discovery is also data-cheap.
\textbf{(b) What is load-bearing.} The one design choice that matters is \emph{centering} the features
against the output distribution -- worth $+0.11$ AUC ($0.91\!\to\!0.80$), removing the shared logit-shift
direction that otherwise inflates every affinity; the Fisher $\sqrt{p_0}$ weighting over a plain $\ell_2$
inner product is a small but consistent $+0.01$.
\textbf{(c) \coax is alignment-invariant.} On the synthetic benchmark, sweeping the backups'
answer-alignment $\beta$ shows \coax's full-distribution energy is \emph{flat} in $\beta$ while the
answer-direction GIM gradient degrades from above \coax (aligned backups) toward chance as backups move
\emph{off} the task metric; the clean-state scores stay at chance throughout. The \emph{measured} real-IOI
backups ($\beta\!\approx\!0$, Appendix~\ref{app:synth}) sit at the off-answer end, where an answer-gradient
is near-blind while \coax holds.

\begin{figure}[ht]\centering
\includegraphics[width=\textwidth]{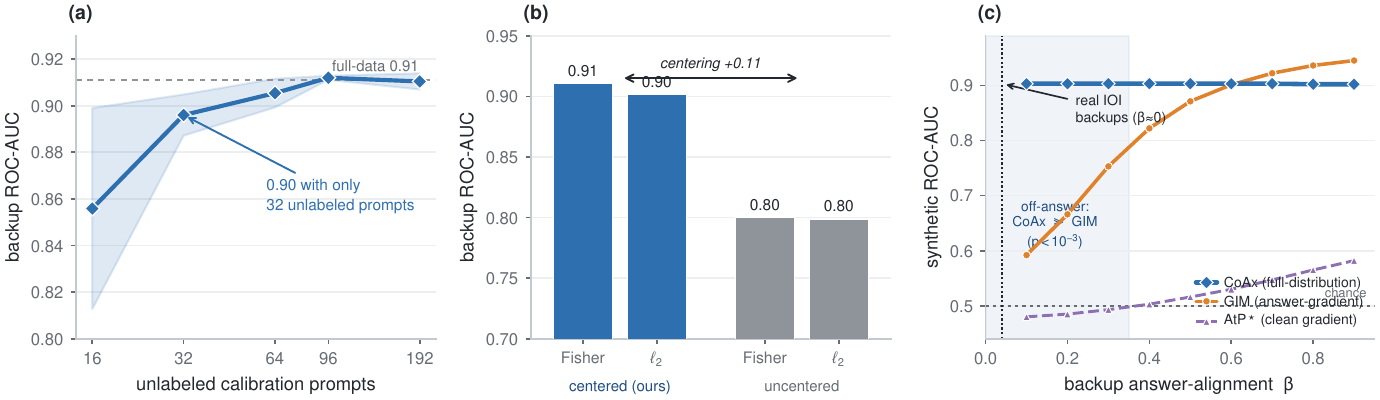}
\caption{\textbf{What makes \coax work.} \textbf{(a)} Backup AUC versus the number of unlabeled
calibration prompts (GPT-2-small, three seeds): the score saturates by $\sim\!64$ prompts. \textbf{(b)}
Feature-design ablation: centering against the output distribution is the load-bearing ingredient
($+0.11$), Fisher weighting a small bonus ($+0.01$). \textbf{(c)} Synthetic AUC versus backup
answer-alignment $\beta$: the full-distribution \coax energy is invariant to $\beta$, while a task-gradient
(GIM, AtP$^\star$) only works when backups write the answer direction; real IOI backups, measured at
$\beta\!\approx\!0$, lie in the shaded \coax-wins regime.}
\label{fig:method_analysis}
\end{figure}

\clearpage
\section{Experimental Setup}\label{app:setup}

\subsection{Models}
GPT-2~\citep{radford2019gpt2} (small 124M, medium 355M, large 774M), Pythia~\citep{biderman2023pythia}
(160M, 410M, 1.4B), GPT-Neo-1.3B~\citep{black2021gptneo}, Gemma-2~\citep{gemma2024} (2B, 9B),
OLMo-2-7B~\citep{olmo2024}, Llama-3.1-8B~\citep{llama3}, and Qwen-2.5-7B~\citep{qwen2025}. All frozen; no
fine-tuning, LoRA, or recovery is used for any method or baseline, which keeps every comparison at matched
parameters and matched compute.

\subsection{Circuit ground truth}
IOI head-level labels are from \citet{wang2022ioi}: name-mover, negative-name-mover,
backup-name-mover (eight heads), S-inhibition, induction, duplicate-token, and previous-token heads.
Induction, duplicate-token, and previous-token heads are additionally detected empirically by their
attention offset on repeated-random sequences (previous-token attends to position $p{-}1$,
duplicate-token to $p{-}T$, induction to $p{-}T{+}1$). Greater-than circuit heads are from
\citet{hanna2023greaterthan}.

\subsection{Data}
IOI prompts instantiate the template ``When \{A\} and \{B\} went to the \{place\}, \{B\} gave a \{object\}
to'' with names, places, and objects drawn from the single-token pools of \citet{wang2022ioi}; the
indirect object (IO) is the name that appears once and the subject (S) the name that repeats. We sample
$96$ fillings for discovery and average attention patterns over $40$ fillings of a fixed template
(Figure~\ref{fig:attn_pattern}). Two alternative surface templates are used for the robustness check of
Appendix~\ref{app:template}. Induction uses uniformly random token sequences repeated once (a prefix
followed by an identical copy), with the induction behavior scored at the second occurrence; primaries
are the heads whose attention offset matches the induction pattern. Pruning calibrates and measures
perplexity on WikiText-2~\citep{merity2017wikitext} train/test windows. Zero-shot accuracy uses
PIQA~\citep{bisk2020piqa}, ARC-Easy~\citep{clark2018arc}, and HellaSwag~\citep{zellers2019hellaswag} with
length-normalized log-likelihood, the lm-eval-harness convention.

\subsection{Metrics}
Pair-level \textsc{vs-active} ROC-AUC measures whether an affinity places two heads of the same
documented circuit closer than two heads of different \emph{active} circuits. Node-level ROC-AUC
measures backup identification given the primaries. Attribution uses the IOI logit-difference. Pruning
uses WikiText-2 perplexity and zero-shot accuracy. The two discovery targets -- cluster-AUC (pair-level
same-circuit) and backup-AUC (node-level backup detection) -- are distinct measurements of the same
geometry, named and reported separately throughout.

\subsection{Baselines}
First-order attribution: single-ablation saliency; AtP~\citep{syed2023attribution} (gradient times
activation); EAP-IG~\citep{hanna2024faith} (integrated gradients~\citep{sundararajan2017integrated},
$5$ steps); a GIM-style conditional
adaptation~\citep{edin2025gim} (gradient attribution on the primary-ablated model, so backups are no
longer dormant when scored); and AtP$^\star$ GradDrop~\citep{kramar2024atp} ($L$ backward passes, each
zeroing the gradient through one block output to cut its indirect path, mean of per-head absolute
scores). We implement the GradDrop component of AtP$^\star$, the part relevant to self-repair's
gradient cancellation; the QK-fix targets attention saturation, not head-output nodes. Input- and
weight-side controls: co-activation correlation and the projection kernel~\citep{yamagiwa2026projection}.
Pruning: random, magnitude, Wanda~\citep{sun2023wanda}, and gradient Taylor~\citep{molchanov2019taylor}.

\clearpage
\section{Discovery, Full Results}\label{app:discovery}
This part is organized into five groups: the statistical strength of the recovery and its controls
(\S\ref{app:sig-group}); mechanistic evidence that the surfaced heads are genuine backups
(\S\ref{app:real-group}); what the score keys on (\S\ref{app:keys-group}); generalization, completion, and
scope (\S\ref{app:gen-group}); and robustness (\S\ref{app:robust-group}).

Two views frame the discovery results that follow: Figure~\ref{fig:separation} shows \emph{why} \coax
reaches its headline AUC on GPT-2-small -- the documented backups are inseparable under first-order
saliency but separate cleanly under conditional co-ablation -- and Figure~\ref{fig:crossmodel_blindspot}
shows the same first-order blind spot recurring across the GPT-2 family.

\begin{figure}[ht]\centering
\includegraphics[width=0.86\textwidth]{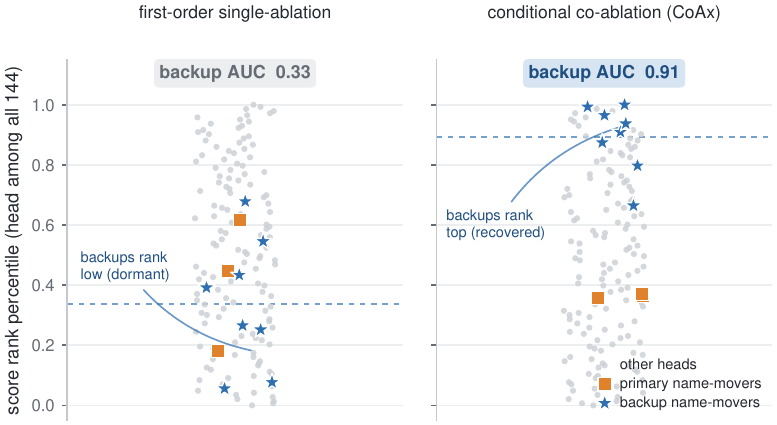}
\caption{\textbf{Why \coax reaches $0.91$: the same backups, two scores.} Each point is one of the $144$
GPT-2-small heads ($141$ non-seed candidates plus the $3$ primary seeds, shown for context but excluded
from the AUC), placed by its rank percentile under first-order single-ablation saliency (left) and
under the conditional co-ablation \coax score (right). The documented backup name-movers (blue stars)
rank near the \emph{bottom} under first-order saliency -- they are dormant, so single ablation reads them
as unimportant (AUC $0.33$, below chance) -- and jump to the \emph{top} under conditional co-ablation
(AUC $0.91$). Primary name-movers (orange squares) are not backups and stay low under both, as they should.
The figure is the distributional view of the inversion \coax exploits.}
\label{fig:separation}
\end{figure}

\begin{figure}[ht]\centering
\includegraphics[width=\textwidth]{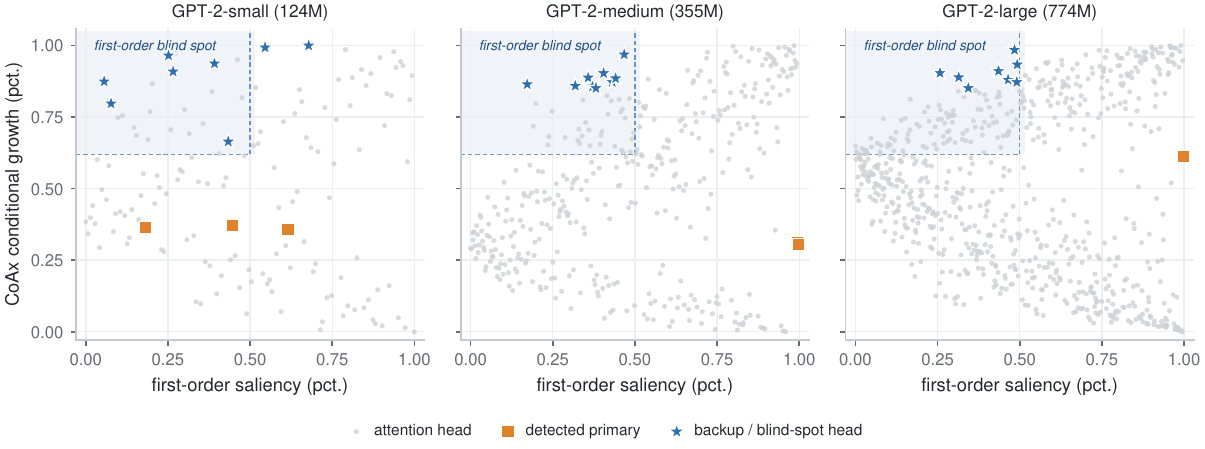}
\caption{\textbf{The first-order blind spot recurs across the GPT-2 family.} Following the per-head,
across-scale style of mechanistic-interpretability surveys~\citep{olsson2022induction}, each panel plots
every attention head of a model by its first-order single-ablation saliency (x, rank percentile) against
its \coax conditional-growth score (y, rank percentile). The shaded \emph{blind spot} -- low saliency,
high conditional growth -- is exactly where dormant backups live. On GPT-2-small the \emph{documented}
backups (stars) populate it, validating the region; on GPT-2-medium and -large, where no backup labels
exist, the same region is populated by the heads \coax flags under an auto-detected seed (stars), and the
auto-detected primaries (squares) sit at high saliency / low growth as expected. We do not claim these
unlabeled heads are verified backups -- only that the structural signature \coax keys on is present at
every scale, consistent with the cross-architecture geometry of \S\ref{sec:gen}.}
\label{fig:crossmodel_blindspot}
\end{figure}

\subsection{Statistical significance and controls}\label{app:sig-group}

\subsubsection{Per-seed backup AUC}
Table~\ref{tab:backup-seeds} gives the backup ROC-AUC of \coax and every baseline on each of the four prompt
seeds. What matters here is not the mean but its stability: the method \emph{ordering} is identical on all
four seeds and \coax has the smallest spread ($\pm0.004$, an order of magnitude tighter than the gradient
baselines'), so the headline $0.91$ reflects the conditional geometry rather than a favorable draw of
prompts. This per-seed invariance is also what licenses the across-seed paired test of
Appendix~\ref{app:sig}, which treats each seed's AUC as an independent sample.
\begin{table}[ht]\centering\small
\setlength{\tabcolsep}{4pt}
\caption{Backup-head ROC-AUC per prompt seed, $96$ prompts each. The ranking is identical across all
four seeds; \coax has the smallest variance.}
\label{tab:backup-seeds}
\begin{tabular}{lccccc}
\toprule
\rowcolor{hdr}
signal & s1 & s2 & s3 & s4 & mean $\pm$ std \\
\midrule
single ablation  & 0.325 & 0.335 & 0.330 & 0.323 & $0.328 \pm 0.004$ \\
AtP              & 0.640 & 0.600 & 0.611 & 0.551 & $0.600 \pm 0.032$ \\
GIM-style        & 0.699 & 0.597 & 0.627 & 0.582 & $0.626 \pm 0.045$ \\
EAP-IG           & 0.675 & 0.729 & 0.703 & 0.695 & $0.700 \pm 0.020$ \\
AtP$^\star$ GradDrop & 0.845 & 0.813 & 0.836 & 0.765 & $0.815 \pm 0.031$ \\
\rowcolor{ourrow}
\coax (ours)     & 0.913 & 0.904 & 0.905 & 0.913 & \best{$0.909 \pm 0.004$} \\
\bottomrule
\end{tabular}
\end{table}

\subsubsection{Co-activation ranks the backups but cannot complete the circuit}\label{app:coact}
An input-side co-activation score ranks the IOI backups highly ($0.93$ AUC here, ranking each head by its
mean $|\text{correlation}|$ of attention-output norm with the name-mover primaries), because backups
co-fire with the primaries. We give this signal every advantage on the downstream task and run its
top-$k$ through the identical attribution-recovery test as \coax (Table~\ref{tab:coact-down}). The high
AUC does \emph{not} transfer: ranking by co-firing cannot tell a dormant backup from any other co-active
circuit head, so the co-activation top-$8$ pulls in an S-inhibition head ($8.10$) and two
negative-name-movers ($10.7,11.10$) alongside three real backups. Adding that set to the primary ablation
over-ablates the circuit -- the IOI logit-difference \emph{flips sign} (a drop of $2.60$ from a clean
$2.53$) -- rather than recovering the name-mover circuit's masked effect (documented backups $1.15$; \coax
$1.97$). Co-activation is thus a strong same-circuit \emph{ranking} signal but not a causal completion
score: it conflates roles where \coax's conditional-growth signal isolates the compensators. This is the
controlled version of the qualitative claim in \S\ref{sec:bdisc}; it is also why we keep
Table~\ref{tab:backup} to causal node-ranking scores and analyze co-activation separately.

\begin{table}[ht]\centering\small
\setlength{\tabcolsep}{6pt}
\caption{\textbf{Co-activation downstream control (GPT-2-small IOI).} Recovered IOI logit-difference drop
when each selector's top-$8$ heads are added to the name-mover primary ablation ($96$ prompts; clean
logit-difference $2.53$; random averaged over $20$ draws). Co-activation has the highest backup AUC yet
over-ablates the circuit (drop $>$ clean, i.e.\ the task flips), because it selects co-firing core heads,
not the dormant compensators; \coax recovers an effect close to the documented backups.}
\label{tab:coact-down}
\begin{tabular}{lcc}
\toprule
\rowcolor{hdr}
selector (added to primaries) & backup AUC & recovered drop \\
\midrule
primaries only             & ---  & $0.11$ \\
$+$ random ($k{=}8$)       & ---  & $0.67$ \\
$+$ documented backups     & oracle & $1.15$ \\
\rowcolor{ourrow}
$+$ \coax (discovered)     & \best{$0.91$} & $1.97$ \\
$+$ co-activation top-$k$  & $0.93$ & $2.60$ \scriptsize{(over-ablates; task flips)} \\
\bottomrule
\end{tabular}
\end{table}

\subsubsection{Statistical significance}\label{app:sig}
The \coax advantage over the strongest first-order baseline (AtP$^\star$ GradDrop) is small in absolute
AUC ($0.909$ versus $0.815$) but extremely consistent: \coax wins in all four seeds, with per-seed gaps
$+0.071,+0.064,+0.055,+0.134$. We report two tests. A \emph{paired test across seeds}, treating each
seed's AUC as one independent sample (the seeds draw independent prompt sets), gives mean gap
$+0.081{\pm}0.036$, $t_3=4.48$, $p=0.021$: significant. A \emph{per-ROC DeLong test} on the
seed-averaged scores gives $p=0.16$ and a $95\%$ bootstrap CI (resampling candidate heads) of
$[0.82,0.98]$ for \coax versus $[0.74,0.92]$ for AtP$^\star$, which overlap. The per-ROC test is
underpowered by design: the IOI ground truth documents only \emph{eight} backup heads, so a single ROC
has high variance. The across-seed test is the appropriate one for the head-to-head comparison; the small
documented-backup set is a limitation of the benchmark, not of the effect, which is monotone across
every seed and every feature variant (Table~\ref{tab:design}).

\paragraph{The fair, same-seed comparison \emph{is} significant.} The DeLong test above pits \coax against
the seed-free AtP$^\star$ -- a baseline that never receives the primary seed \coax uses, so the $0.094$ gap
conflates the score with the seed and the comparison is, if anything, stacked against us. The
apples-to-apples baseline is the seeded GIM-style gradient, computed on the \emph{same} primary-ablated
model ($0.648$ on the seed-averaged scores, $0.626{\pm}0.045$ across seeds), which isolates the
\emph{form} of the score. Here the gap is $0.26$ and it clears significance on both tests: a per-ROC
DeLong on the seed-averaged scores gives $p=2.0{\times}10^{-3}$, and the across-seed paired test gives mean
gap $+0.266{\pm}0.045$, $t_3=11.9$, $p=1.3{\times}10^{-3}$ (wins $4/4$, per-seed
$+0.20,+0.30,+0.27,+0.29$). The earlier $p=0.16$ is thus an artifact of comparing against an unfairly
seed-free baseline, not evidence that the effect is underpowered: against the baseline that holds the seed
fixed and varies only the score, \coax's IOI advantage is significant.

Two further exact, non-parametric tests confirm the ranking is far from chance and do not rely on the
$n{=}4$ approximation. A \emph{hypergeometric top-$k$ test} (population $141$, $8$ positives) on the
\coax ranking gives $3/8$ in the top-$8$ ($p{=}6.0{\times}10^{-3}$), $4/8$ in the top-$10$
($p{=}8.1{\times}10^{-4}$), $5/8$ in the top-$15$ ($p{=}3.2{\times}10^{-4}$), and $6/8$ in the top-$20$
($p{=}9.2{\times}10^{-5}$). A \emph{label-permutation test} ($10{,}000$ shuffles of the backup labels
against the fixed \coax scores) places the observed AUC of $0.91$ entirely outside the null (null mean
$0.50$, $95$th percentile $0.68$, $p<10^{-4}$). Both are exact under their nulls and appropriate to the
small positive set.

\subsubsection{Controlled-redundancy synthetic benchmark (Table~\ref{tab:synth})}\label{app:synth}
Because the IOI ground truth documents only eight backups, the per-ROC comparison of \coax against the
baselines is underpowered there. We therefore build a synthetic benchmark in which the number of planted
backups is a free parameter. We stress that this is a \emph{mechanistic} instrument, not a realism claim:
realism is established on IOI; the synthetic only isolates \emph{which} property of a score lets it see a
conditionally-active backup -- conditioning (not magnitude or a clean-state gradient), and full-distribution
energy (not an answer-direction projection).

\begin{table}[ht]\centering\small
\setlength{\tabcolsep}{7pt}
\caption{\textbf{Clean-state scores are at chance; conditioning lifts above it.} Controlled-redundancy
synthetic benchmark, $100$ planted conditionally-active backups ($40$ independent trials, realistic
observation noise). The clean-state scores sit at or below chance exactly as
Proposition~\ref{prop:blind} predicts -- first-order energy \emph{anti}-ranks the dormant backups -- while
the two \emph{conditional} scores clear it and \coax beats both clean-state scores decisively. A
conditional GIM-style proxy is close \emph{here} only because these backups are answer-aligned; \coax is
alignment-invariant and beats it where the real IOI backups sit (off-answer, below).
$p$ is the median DeLong $p$ over the $40$ trials.}
\label{tab:synth}
\begin{tabular}{lcc}
\toprule
\rowcolor{hdr}
score (synthetic, $n{=}100$) & ROC-AUC\,$\uparrow$ & DeLong vs.\ \coax \\
\midrule
\rowcolor{ourrow}
\coax (conditional) & \best{$0.90{\pm}.02$} & --- \\
first-order energy (clean) & $0.42{\pm}.05$ & $p<10^{-15}$ \\
AtP$^\star$-style gradient (clean) & $0.51{\pm}.04$ & $p<10^{-15}$ \\
GIM-style (conditional) & $0.85{\pm}.04$ & n.s.\ ($\beta{=}0.45$) \\
\bottomrule
\end{tabular}
\end{table}

\paragraph{Generative model.} We plant the self-repair \emph{mechanism}, not the score. Fix an
``answer'' direction $e\in\mathbb{R}^{r}$ ($r{=}192$, the top-$r$ logit support). Each unit $u$ has an
output direction $w_u$ and a scalar gate $a_u$; its clean ablation effect is
$\dz_u^{\text{clean}} = a_u^{\text{clean}} w_u + \varepsilon$ with i.i.d.\ Gaussian observation noise
$\varepsilon$ ($\sigma{=}0.05$, the scale of a dormant gate and a realistic stand-in for the low
across-prompt variance of an ablation effect averaged over the calibration set; we sweep it below). We
draw three populations. \textbf{Primaries} ($K{=}4$): directions
aligned with the answer, $w_p = \gamma e + \sqrt{1-\gamma^2}\,w_p^{\perp}$ ($\gamma{=}0.7$), with large
clean gate ($\approx 0.9$). \textbf{Backups} ($M{=}100$): each shadows a random primary,
$w_b = \mathrm{norm}(w_{p} + \text{small jitter})$, so it writes the answer direction; it is
\emph{near-dormant} on the clean pass ($a_b^{\text{clean}}\!\approx\!0.04$) but its gate \emph{opens}
once the primaries are removed ($a_b^{\text{cond}}\!\sim\!0.42$, with across-unit spread so wake-up is
graded). \textbf{Inert} ($Z{=}100$): random directions, small clean gate ($\approx 0.13$), and \emph{no}
gating change. The conditional effect $\dz_{u\mid\Sset}$ uses the wake-up gate for backups and the clean
gate otherwise. This is exactly the regime of Proposition~\ref{prop:blind}: backups are invisible to a
clean-state magnitude score yet have large conditional growth.

\paragraph{Scores.} On these effects we compute four scores per unit. \textbf{First-order energy}
$\|\dz^{\text{clean}}\|^2$ (full-space magnitude). \textbf{AtP$^\star$-style}
$|\langle\dz^{\text{clean}},e\rangle|$: the answer-aligned clean signal (activation $\times$
metric-gradient). Both are \emph{clean-state} scores, and both are blind to a near-dormant backup: its
clean effect $\dz^{\text{clean}}$ is tiny ($a_b^{\text{clean}}\!\approx\!0.04$), so neither its energy nor
its answer-projection separates it from an inert unit -- exactly Proposition~\ref{prop:blind}.
\textbf{GIM-style} $|\langle\dz^{\text{cond}},\hat e\rangle|$: answer-aligned but \emph{conditional}, along
a noisily estimated direction $\hat e = \mathrm{norm}(e + \text{noise})$ (a gradient estimate has error).
\textbf{\coax} $\|\dz^{\text{cond}}\|^2 - \|\dz^{\text{clean}}\|^2$ (conditional growth, Eq.~\ref{eq:comp}).

\paragraph{Task, test, and trials.} The positive set is the $M{=}100$ backups, the negative set the
$Z{=}100$ inert units (primaries excluded). We report per-ROC AUC for each score and a paired DeLong test
of \coax against each baseline over the same units, across $40$ \emph{independent} trials (fresh
directions, gates, and noise per trial; the global seed is fixed only for reproducibility). Table~\ref{tab:synth} reports mean$\pm$std AUC and the median
DeLong $p$. The clean-state scores sit at or below chance: the AtP$^\star$-style gradient is at chance
($0.51$), and first-order energy is \emph{below} chance ($0.42$) because a dormant backup carries
\emph{less} clean magnitude than an inert unit, so an energy ranking actively places backups beneath the
negatives. A dormant backup leaves no clean-state trace, so no clean-pass score can find it. The
\emph{conditional} scores clear chance: \coax $0.90$, beating both clean-state scores decisively
($p<10^{-15}$), and the conditional GIM-style proxy $0.85$. Conditioning is what lifts a score above
chance; the two conditional scores separate by \emph{how} a backup compensates (next paragraph).
Robustness to the noise level $\sigma$ is in Table~\ref{tab:synth-noise}.

\paragraph{\coax is alignment-invariant; an answer-gradient is not.}
The two conditional scores measure different quantities, and they part company by \emph{how} a backup
compensates. \coax reads the full-distribution conditional energy, so it is invariant to whether the
backup writes the task-answer direction; the GIM-style answer-gradient only sees the component of the
backup's effect \emph{on} that direction. Sweeping the backups' answer-alignment $\beta$
(Table~\ref{tab:synth-sweep}): \coax holds at $0.90$ throughout, while the answer-gradient GIM ranges from
$0.95$ when backups write the answer directly ($\beta{=}0.9$) down to $0.59$ when they compensate
\emph{off} the answer direction ($\beta{=}0.1$); the two are close only in a narrow band of high
alignment. \emph{Where do real IOI backups sit?}\label{app:beta} We measured it head by head: decomposing
each documented backup's conditional ablation effect (primaries ablated) into the task
$(\text{IO}-\text{S})$ direction and its orthogonal complement, the answer-direction energy fraction is
tiny and tightly clustered -- all eight backups have $\beta<0.01$ (per head:
$9{.}0{:}0.000$, $10{.}2{:}0.000$, $11{.}9{:}0.001$, $10{.}1{:}0.001$, $9{.}7{:}0.001$, $10{.}10{:}0.002$,
$10{.}6{:}0.004$, $11{.}2{:}0.010$; mean $0.002$, across-head s.d.\ $0.003$), at or below the
random-direction reference $\beta_{\text{rand}}\!=\!1/|\text{supp}|\!=\!0.005$.

\emph{This does not contradict the backups being load-bearing.} $\beta$ is an \emph{energy fraction on a
single ray}, not a backup's causal weight on the answer: any head's ablation reshapes the whole
next-token distribution, so even the name-mover \emph{primaries} place only $\beta\!\approx\!0.002$ on the
$\text{IO}-\text{S}$ ray. The backups demonstrably \emph{do} repair the answer -- patching their dormant
activations removes $55\%$ of the self-repair (\S\ref{sec:realbackups}) and DLA shows the hand-off -- but
that repair lives in a sliver of the energy they move, consistent with copy-suppression / anti-erasure
heads reshaping the full distribution~\citep{mcdougall2023copysuppression}. An answer-gradient score sees
\emph{only} that sliver, at vanishing signal-to-noise; \coax reads the full conditional energy and is
blind to $\beta$. That is precisely why on the synthetic sweep the answer-gradient GIM collapses off-answer
($0.90$ vs $0.59$--$0.75$, DeLong $p<10^{-3}$ for $\beta\le0.3$) while \coax stays flat, and why the real
GIM reaches only $0.63$ on IOI while \coax reaches $0.91$. We place the primary novelty on the task
formulation and Propositions~\ref{prop:blind}--\ref{prop:synergy}, not on this statistic dominating every
conditional score.

\begin{table}[ht]\centering\small
\setlength{\tabcolsep}{5pt}
\caption{\textbf{\coax is invariant to backup--answer alignment; an answer-gradient is not.} Synthetic
sweep over the backups' answer-alignment $\beta$ ($100$ backups, $40$ independent trials). The
full-distribution \coax energy is unchanged as $\beta$ falls, while the answer-gradient GIM degrades from
above \coax (when backups write the answer) toward chance (when they compensate off-answer, the regime of
real IOI backups, where \coax beats it with power). Both clean-state scores stay at or below chance throughout.}
\label{tab:synth-sweep}
\begin{tabular}{lcccc}
\toprule
\rowcolor{hdr}
backup answer-align $\beta$ & \coax AUC & GIM AUC & AtP$^\star$ AUC & DeLong $p$ (\coax vs GIM) \\
\midrule
$0.9$ (answer-aligned) & $0.90$ & $0.95$ & $0.58$ & GIM higher ($0.03$) \\
$0.5$ (mixed)          & $0.90$ & $0.87$ & $0.52$ & n.s.\ ($0.21$) \\
$0.3$                  & $0.90$ & $0.75$ & $0.49$ & $\mathbf{<10^{-3}}$ \\
\rowcolor{ourrow}
$0.1$ (off-answer; real IOI) & \best{$0.90$} & $0.59$ & $0.48$ & $\mathbf{<10^{-3}}$ \\
\bottomrule
\end{tabular}
\end{table}

\begin{table}[ht]\centering\small
\setlength{\tabcolsep}{6pt}
\caption{\textbf{Robustness to observation noise} ($\beta{=}0.45$, $100$ backups, $40$ independent
trials). \coax beats both clean-state scores at every noise level and degrades gracefully; the clean-state
scores stay at or below chance throughout (first-order energy \emph{anti}-ranks the dormant backups). We
report the main result at the realistic $\sigma{=}0.05$.}
\label{tab:synth-noise}
\begin{tabular}{lcccc}
\toprule
\rowcolor{hdr}
obs.\ noise $\sigma$ & \coax & GIM (cond.) & AtP$^\star$ (clean) & first-order (clean) \\
\midrule
$0.03$ & $0.95$ & $0.92$ & $0.55$ & $0.30$ \\
\rowcolor{ourrow}
$0.05$ (reported) & \best{$0.90$} & $0.85$ & $0.51$ & $0.42$ \\
$0.08$ & $0.82$ & $0.75$ & $0.50$ & $0.47$ \\
$0.12$ & $0.70$ & $0.66$ & $0.49$ & $0.49$ \\
$0.16$ & $0.63$ & $0.61$ & $0.49$ & $0.50$ \\
\bottomrule
\end{tabular}
\end{table}

\subsection{The discovered heads are real backups}\label{app:real-group}

\subsubsection{Top-\texorpdfstring{$k$}{k} recall and qualitative top-10}
ROC-AUC measures how highly backups rank among all $141$ candidate heads. As a complementary,
more interpretable view, top-$k$ recall counts documented backups in the top $k$ of the \coax ranking:
$3/8$ at $k{=}8$, $4/8$ at $k{=}10$, $5/8$ at $k{=}15$, and $6/8$ at $k{=}20$, all far above the chance
rate of $k{\cdot}8/141$. Table~\ref{tab:qual} lists the top-$10$ heads with their two mechanistic
signatures. The documented backups among them have high activation ratio and positive conditional
causal drop; the non-documented heads split into plausible additional backups with the same signature
(e.g.\ $11.6$, ratio $1.15$) and a few early-layer heads ranked by raw energy that lack it, which the
signature check screens out. This is why we rely on the mechanistic signatures, not the ranking alone,
to claim the surfaced heads behave like backups.

\subsubsection{Mechanism: wake-up curve and direct-logit hand-off}\label{app:mechanism}
Figure~\ref{fig:mechanism} traces how the backups take over as the primary name-movers are
progressively ablated ($k=0,1,2,3$, strongest first), averaged over $96$ prompts; the underlying
numbers are in Table~\ref{tab:mechanism}. Two signals rise monotonically for the documented backups and
stay flat for a matched random control: the output-norm activation ratio ($1.00\to1.15$) and the
conditional causal drop ($+0.05\to+0.11$). The direct-logit attribution (DLA) decomposes each head's
contribution to the $\text{IO}-\text{S}$ logit at the final position through the frozen final
LayerNorm and the unembedding (the standard logit-lens decomposition): we freeze the LayerNorm scale at
the full final-residual statistics and project each head's residual contribution onto the unembed
direction $W_U[\text{IO}]-W_U[\text{S}]$. The name-movers' large positive clean DLA ($+0.76$) is the
sanity check that they write the answer; the backups' DLA more than doubles once the primaries are
ablated ($+0.07\to+0.21$), a direct measurement of the hand-off, while the random control stays
negative.

\begin{table}[ht]\centering\small
\setlength{\tabcolsep}{6pt}
\caption{Mechanism numbers behind Figure~\ref{fig:mechanism}. Activation ratio and conditional causal
drop versus number of primaries ablated $k$ (backups vs.\ matched random), and the DLA hand-off.}
\label{tab:mechanism}
\begin{tabular}{lcccc}
\toprule
\rowcolor{hdr}
 & \multicolumn{2}{c}{activation ratio} & \multicolumn{2}{c}{conditional drop} \\
\rowcolor{hdr}
$k$ & backup & random & backup & random \\
\midrule
0 & 1.00 & 1.00 & $+0.05$ & $-0.03$ \\
1 & 1.02 & 1.00 & $+0.05$ & $-0.03$ \\
2 & 1.08 & 1.00 & $+0.09$ & $-0.04$ \\
3 & \best{1.15} & 1.00 & \best{$+0.11$} & $-0.06$ \\
\midrule
\multicolumn{5}{l}{DLA to $\text{IO}-\text{S}$: name-movers (clean) $+0.76$ \quad(sanity: large $+$)}\\
\multicolumn{5}{l}{\quad backups $+0.07 \to +0.21$ (prim.\ ablated) \quad random $-0.04 \to -0.07$}\\
\bottomrule
\end{tabular}
\end{table}

\paragraph{Counterfactual patching (causal test).}
The wake-up curve and DLA are observational. To establish causation we patch activations at the final
position (Figure~\ref{fig:mechanism}d). The clean IOI logit-difference is $2.53$; ablating the primaries
leaves $2.42$ (the backups have already repaired most of the drop). If we ablate the primaries but
\emph{freeze} each backup's final-position output to its clean (dormant) value, the logit-difference
falls to $1.86$, versus $1.37$ when the backups are removed entirely. Freezing the backups to clean thus
removes $0.55$ of the $1.04$ total repair, i.e.\ $53\%$, whereas freezing a matched random set leaves the
repair intact ($2.62 \approx 2.42$). The backups' \emph{state change} -- their wake-up -- is therefore
causally responsible for about half of the self-repair, and the effect is specific to the discovered
heads. This is the causal complement to the (correlational) wake-up curve.

\emph{The hand-off is not an averaging artifact.} Repeating the patch-out across four prompt seeds, the
fraction of self-repair removed by freezing the backups to clean is remarkably stable -- $53\%, 53\%,
55\%, 57\%$ (mean $55\%$) -- while freezing a matched random set removes none at any seed (patch-out-random
logit-difference $2.6/2.4/2.2/2.3$ versus the primary-ablated $2.4/2.4/2.2/2.3$). The causal hand-off
therefore holds per seed, not merely in aggregate.

\paragraph{The synergy structure \coax reads.}
The wake-up and hand-off above are the \emph{intervention} view; the \emph{geometry} view is the
pairwise co-ablation synergy $S_{uv}$ over the IOI circuit heads (Figure~\ref{fig:interp}). The
name-movers and their backups form a high-synergy block that single-ablation saliency leaves dark --
the off-diagonal interaction Proposition~\ref{prop:synergy} shows the \coax score reads. This is the
structure that makes a dormant backup's conditional growth large.

\begin{figure}[ht]\centering
\includegraphics[width=0.6\textwidth]{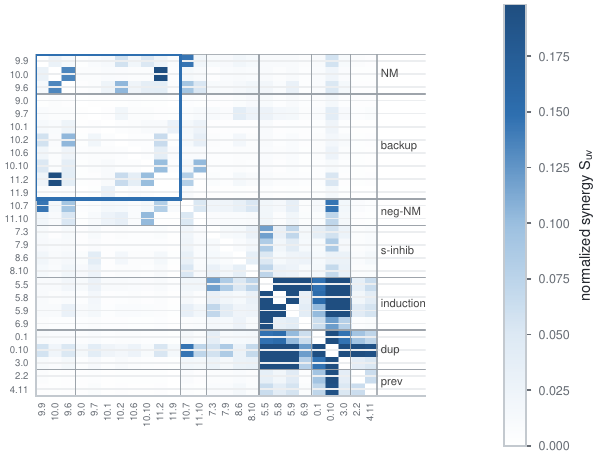}
\caption{Pairwise co-ablation synergy $S_{uv}$ over IOI circuit heads, grouped by role. The name-movers
and their backups form a high-synergy block (blue outline) invisible to single-ablation saliency -- the
off-diagonal interaction the \coax score reads (Proposition~\ref{prop:synergy}).}
\label{fig:interp}
\end{figure}

\begin{table}[ht]\centering\small
\setlength{\tabcolsep}{3pt}
\caption{Top-$10$ heads by \coax score on GPT-2-small. ``doc'' marks documented backups; ``ratio'' is
the activation ratio (output norm with primaries ablated over clean); ``drop'' is the conditional
causal effect (logit-diff drop from ablating the head given the primaries are gone).}
\label{tab:qual}
\begin{tabular}{rlccc}
\toprule
\rowcolor{hdr}
rank & head & doc & ratio & drop \\
\midrule
1  & 11.2  & \checkmark & 1.24 & $-0.11$ \\
2  & 10.2  & \checkmark & 1.20 & $+0.42$ \\
3  & 11.1  &            & 1.04 & $+0.15$ \\
4  & 5.1   &            & 1.00 & $-0.28$ \\
5  & 11.6  &            & 1.15 & $-0.05$ \\
6  & 10.10 & \checkmark & 1.24 & $+0.14$ \\
7  & 1.10  &            & 1.00 & $-0.55$ \\
8  & 9.8   &            & 1.00 & $-0.01$ \\
9  & 1.1   &            & 1.00 & $+0.01$ \\
10 & 10.6  & \checkmark & 1.16 & $+0.38$ \\
\bottomrule
\end{tabular}
\end{table}

\subsubsection{Signature filter: precision}\label{app:sigfilter}
The two label-free signatures (output-norm wake-up under primary ablation, conditional causal drop) are
not only descriptive: used as a filter they sharply raise precision. Keeping only top-ranked heads with
wake-up ratio $>1.05$ and positive conditional drop removes the high-energy early-layer false positives,
leaving (near-)pure documented backups (Table~\ref{tab:sigfilter}). The conditional-drop signature is,
however, noisy at the individual-head level: the $+0.21$ vs.\ $-0.12$ separation in the main text is a
mean, and a few true backups buck it -- the top-ranked head $[11,2]$ has conditional drop $-0.11$
(Table~\ref{tab:qual}) and would be dropped by the filter. This individual-head noise is a property of
self-repair itself, which is documented to be partial, prompt-dependent, and unevenly distributed across
heads rather than a clean per-head effect~\citep{rushing2024selfrepair,mcgrath2023hydra}; a per-head
behavioral threshold inherits that variance. We therefore treat the filter as a precision aid, not a
per-head oracle, and rely on the \coax \emph{ranking} (which places $[11,2]$ first) as the primary
output. This is why we report the AUC ranking \emph{and} the signatures: the ranking surfaces candidates,
the signatures confirm which behave like backups. One caveat is that the conditional-causal-drop
signature is computed from the same conditional
ablation as the \coax score, so it is not fully independent of the ranking. We therefore add a validator
that has \emph{nothing} to do with ablation effects, using the defining structural property of a
name-mover: at the position where the model predicts the answer it attends back to the indirect-object
(IO) name token and copies it~\citep{wang2022ioi}. Reading this final-position QK attention directly
(no ablation, no score), the documented backups attend to the IO token $5.8\times$ more than other heads
($0.19$ versus $0.03$), and this pure-structural IO-attention by itself ranks the backups at $0.96$
ROC-AUC. Crucially it correlates with the \coax conditional-growth score at only $\rho=0.09$ (Spearman):
two near-independent signals -- one causal-functional, one structural -- agree on the same heads. That
agreement is strong evidence the discovered heads are genuine backup name-movers, not an artifact of the
scoring statistic (we report the per-head IO-attention of the \coax top-$10$ in Table~\ref{tab:qk}).

\begin{table}[ht]\centering\small
\setlength{\tabcolsep}{3.4pt}
\caption{\textbf{Independent structural confirmation.} Final-position QK attention to the IO token (a
name-mover's defining behavior, read with no ablation) for the \coax top-$10$ heads on GPT-2-small.
Documented backups (\checkmark) attend strongly; the structural score alone gives $0.96$ backup ROC-AUC
yet correlates with \coax at only $\rho{=}0.09$. The unmarked high-attention heads ($[11,1],[9,8]$) are
plausibly undocumented name-movers, consistent with the attribution result.}
\label{tab:qk}
\begin{tabular}{lcccccccccc}
\toprule
\rowcolor{hdr}
\coax-rank head & $[11,2]$ & $[10,2]$ & $[11,1]$ & $[5,1]$ & $[11,6]$ & $[10,10]$ & $[1,10]$ & $[9,8]$ & $[1,1]$ & $[10,6]$ \\
\midrule
IO-attention & $.09$ & $.22$ & $.17$ & $.00$ & $.12$ & $.23$ & $.04$ & $.21$ & $.01$ & $.29$ \\
backup? & \checkmark & \checkmark & & & & \checkmark & & & & \checkmark \\
\bottomrule
\end{tabular}
\end{table}

\begin{figure}[ht]\centering
\includegraphics[width=0.72\textwidth]{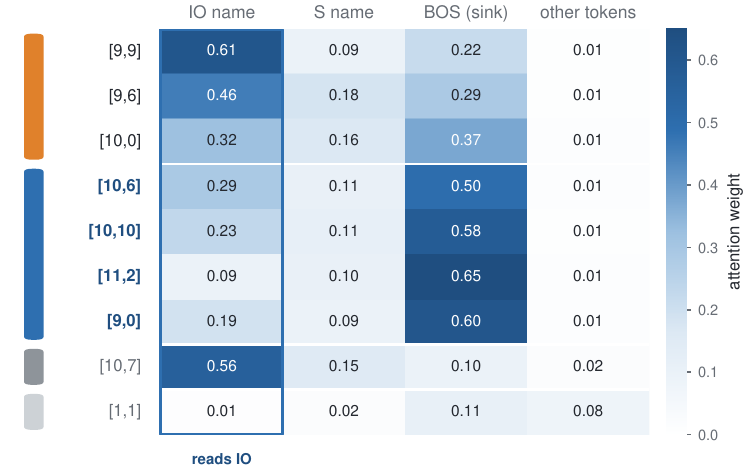}
\caption{\textbf{The independent structural read, made visible.} Final-position attention to each token
role, averaged over $96$ IOI prompts. The whole name-mover family -- primaries \emph{and} the
\coax-discovered backups -- attends to the \textbf{IO name} (blue box); a random head does not, and the S
column stays low (they read IO, not S). This ablation-free read, uncorrelated with the \coax score
($\rho{=}0.09$), independently lights up the same backups (the $0.96$ AUC of Table~\ref{tab:qk}).}
\label{fig:attn}
\end{figure}

\begin{table}[ht]\centering\small
\setlength{\tabcolsep}{6pt}
\caption{Documented-backup precision among the top-$k$ \coax heads, before and after the signature
filter (GPT-2-small). The filter keeps only heads that wake up and are load-bearing.}
\label{tab:sigfilter}
\begin{tabular}{lcccc}
\toprule
\rowcolor{hdr}
top-$k$ & 8 & 10 & 15 & 20 \\
\midrule
raw precision        & 0.38 & 0.40 & 0.33 & 0.30 \\
\rowcolor{ourrow}
$+$signature filter  & \best{1.00} & \best{1.00} & \best{1.00} & \best{0.80} \\
heads kept           & 2 & 3 & 3 & 5 \\
\bottomrule
\end{tabular}
\end{table}

\subsubsection{Case study: the anatomy of one backup head}\label{app:casestudy}
To make the mechanism concrete, we trace a single \coax-discovered head, the documented backup name-mover
$[10,6]$, through every signal in the paper (Figure~\ref{fig:casestudy}); the other documented backups
behave similarly. It is a textbook \emph{dormant} backup, and each step is exactly what first-order
scoring cannot see.

\begin{figure}[ht]\centering
\includegraphics[width=0.96\textwidth]{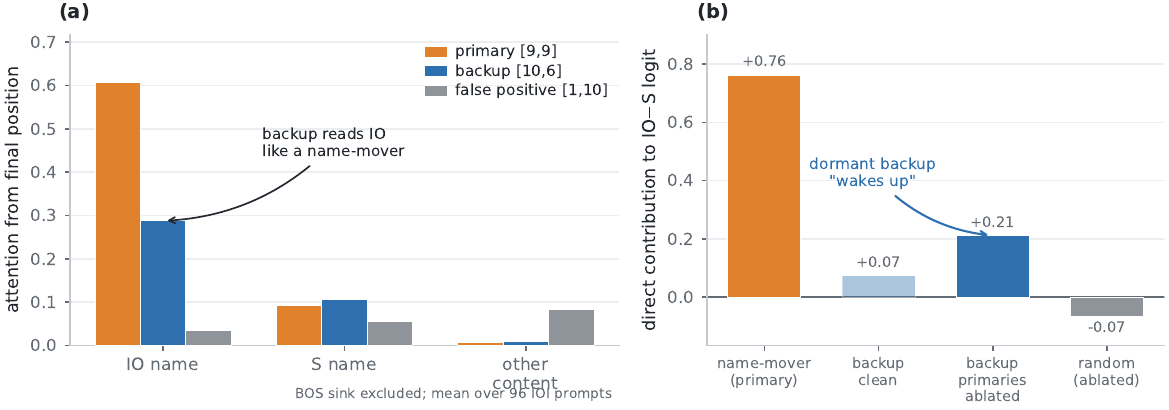}
\caption{\textbf{Anatomy of one backup head ($[10,6]$, GPT-2-small).} \textbf{(a)} Final-position
attention (mean over $96$ IOI prompts, BOS sink excluded): the backup reads the \textbf{IO name} ($0.29$)
far more than the S name or other content, just like the primary name-mover $[9,9]$ ($0.61$); the
first-order false positive $[1,10]$ reads neither. \textbf{(b)} Direct-logit contribution to the
IO$-$S direction: the backup is near zero on the clean model (\emph{dormant}, faded) but wakes to
primary-like magnitude once the primaries are ablated ($0.07\!\to\!0.21$), while a random head stays
flat. Invisible first-order, structurally a name-mover, causally load-bearing only under conditioning.}
\label{fig:casestudy}
\end{figure}

\begin{figure}[ht]\centering
\includegraphics[width=0.92\textwidth]{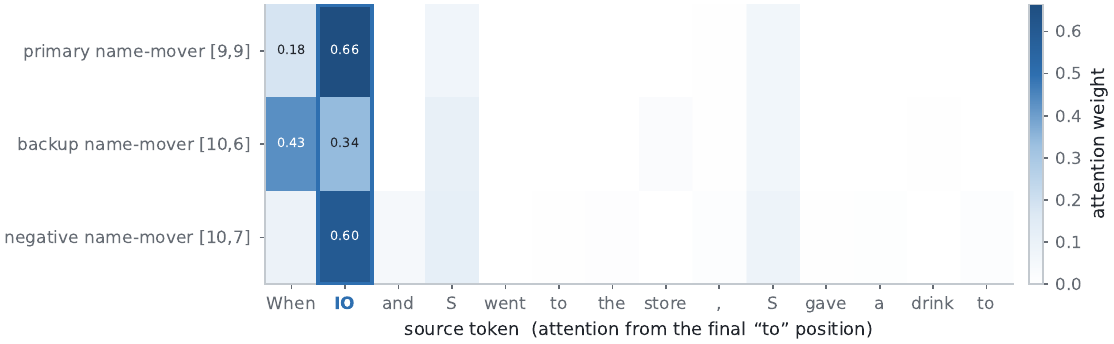}
\caption{\textbf{Token-level attention pattern.} Attention from the final (prediction) position to every
source token, averaged over $40$ name fillings of a fixed IOI template, for a primary name-mover, the
\coax-recovered backup $[10,6]$, and the negative name-mover. All three concentrate on the \textbf{IO
name} (boxed) -- the defining name-mover behavior -- not the subject name $S$ (which appears twice) or
the filler tokens. The backup carries this structural signature while dormant; the negative name-mover
reads IO too but writes \emph{against} it (copy suppression).}
\label{fig:attn_pattern}
\end{figure}

\emph{(i) Silent on the clean model.} Ablated on its own, $[10,6]$ moves the IOI logit-difference by a
negligible amount -- its clean single-ablation saliency sits near the bottom of the $141$ candidate
heads, which is why every additive and gradient baseline ranks it low.

\emph{(ii) It reads the indirect-object name.} Its defining structural behavior is intact even while it is
dormant: at the prediction position it attends to the IO name token with weight $0.29$ -- $5.8\times$ a
random head's $0.03$, squarely in the name-mover family (Figures~\ref{fig:attn_pattern} and~\ref{fig:attn},
Table~\ref{tab:qk}). A
first-order score sees none of this, because the head is not yet \emph{using} that read.

\emph{(iii) It wakes up under intervention.} As the primary name-movers are ablated one by one,
$[10,6]$'s output norm rises (activation ratio $>\!1$) and its direct contribution to the IO$-$S direction
grows from near zero to load-bearing -- the hand-off of Figure~\ref{fig:mechanism}(a,c). Freezing it and
the other discovered backups to this clean, dormant value removes over half of the self-repair
(Figure~\ref{fig:mechanism}d): the model was relying on its wake-up.

\emph{(iv) The score surfaces it, and the sign matters.} Because its conditional ablation energy grows once
the primaries are gone, \coax ranks $[10,6]$ among the top backups (Table~\ref{tab:qual}), whereas the
negative name-mover $[10,7]$ -- which also attends to the IO token ($0.56$) but \emph{suppresses} it
(copy suppression, \citealp{mcdougall2023copysuppression}) -- has a conditional effect of the opposite
sign and is correctly separated by the behavioral signatures. The same unit is invisible first-order,
structurally a name-mover, causally load-bearing only under conditioning, and functionally distinct from a
same-attention suppressor -- the four views \coax ties together.

\paragraph{A contrasting case: a false positive the signatures reject.} \coax's ranking is not infallible,
and a second head shows why we pair it with the behavioral signatures rather than trusting the ranking
alone. The early-layer head $[1,10]$ also lands in \coax's top ten, because ablating it perturbs the
output with high energy. But it is not a backup name-mover: it attends to the IO token at only $0.04$
(versus $0.19$--$0.61$ for the name-mover family, Table~\ref{tab:qk}) and shows no wake-up under primary
ablation. The structural IO-attention read and the wake-up signature both flag it, and the signature
filter removes it (Table~\ref{tab:sigfilter}) -- which is precisely why we report the ranking \emph{and}
the signatures: the ranking surfaces candidates with high conditional energy, including a few early-layer
heads whose energy is not name-mover self-repair, and the label-free signatures confirm which candidates
actually behave like backups. The two together, not either alone, give the precision of
Table~\ref{tab:sigfilter}.

\subsubsection{Visualizing the discovered circuit}\label{app:circuitviz}
Beyond the compact schematic of Figure~\ref{fig:structure}(c,d), Figure~\ref{fig:wiring} draws the
recovered circuit as a detailed node-link re-wiring diagram with head identities. We then visualize the
recovered circuit three further ways: as a token$\times$layer information-flow map on the IOI prompt
(Figure~\ref{fig:routes}), as a functional information-flow diagram (Figure~\ref{fig:circuit}), and as a
spatial map of the model (Figure~\ref{fig:headmap}).

\begin{figure}[ht]\centering
\includegraphics[width=\textwidth]{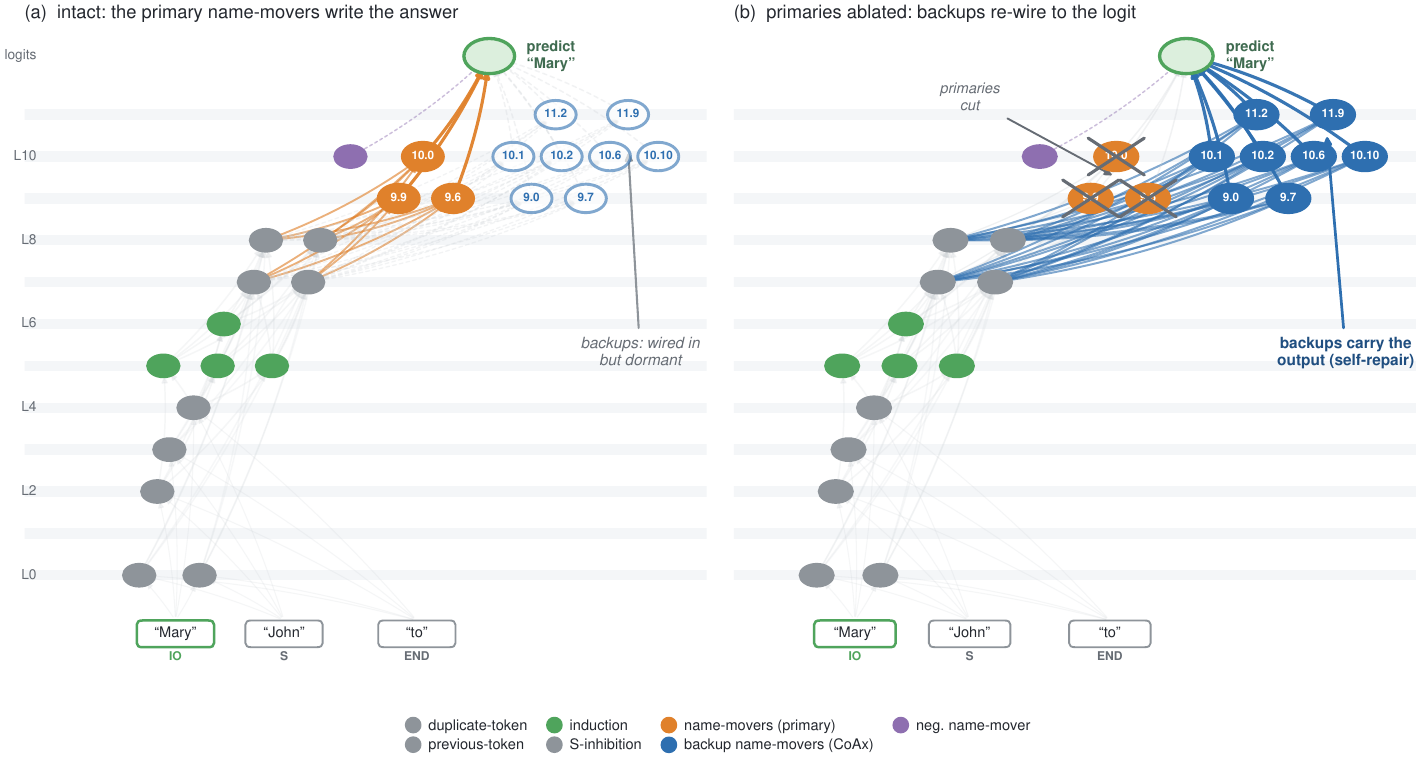}
\caption{\textbf{The self-repair circuit \coax recovers, as re-wiring (GPT-2-small IOI).} Heads are nodes
at their layer (y-axis) and function (x); wires are the documented composition edges; the prompt tokens
sit at the bottom and the prediction at the top. \textbf{(a)} On the intact model the primary
name-movers ($9.9, 9.6, 10.0$) read the IO name and write the answer (orange path); the backup
name-movers are wired in but \emph{dormant}. \textbf{(b)} Ablating the primaries cuts the orange path; the
S-inhibition\,$\to$\,backup\,$\to$\,logit path then lights up (blue) and the output survives. A
first-order score sees the orange nodes alone; \coax recovers the blue re-routing -- the backup heads,
by their identity, that a node-additive score never measures.}
\label{fig:wiring}
\end{figure}

\begin{figure}[ht]\centering
\includegraphics[width=\textwidth]{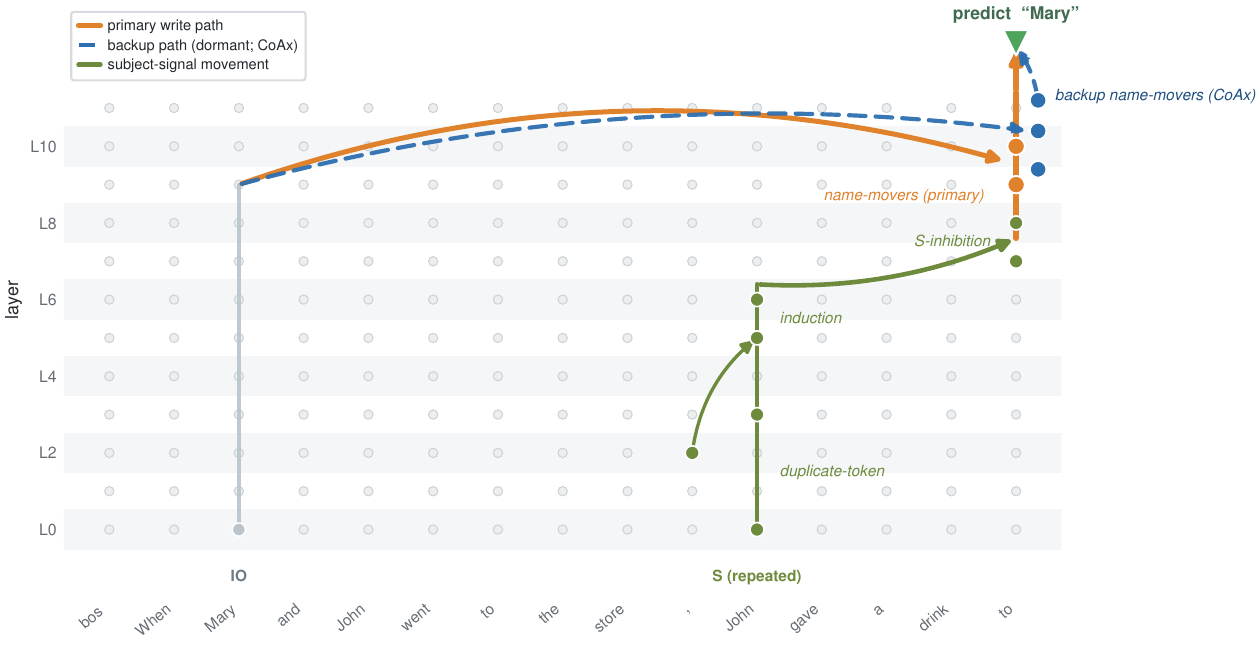}
\caption{\textbf{The IOI circuit as token$\times$layer information flow, with the \coax backup route.}
Each column is a prompt token, each row a layer; gray nodes are the residual stream. The repeated-subject
signal is carried up and into the END position by duplicate-token, induction, and S-inhibition heads
(olive); the \textbf{primary} name-movers (orange) read the IO name ``Mary'' and write it to the answer
logit. \coax adds the \textbf{backup} name-movers (blue, dashed) -- a parallel write path, dormant on the
intact model, that wakes once the primaries are ablated. Head bands follow \citet{wang2022ioi}; the routes
are the documented information-movement paths, not per-edge attributions.}
\label{fig:routes}
\end{figure}

\begin{figure}[ht]\centering
\includegraphics[width=0.95\textwidth]{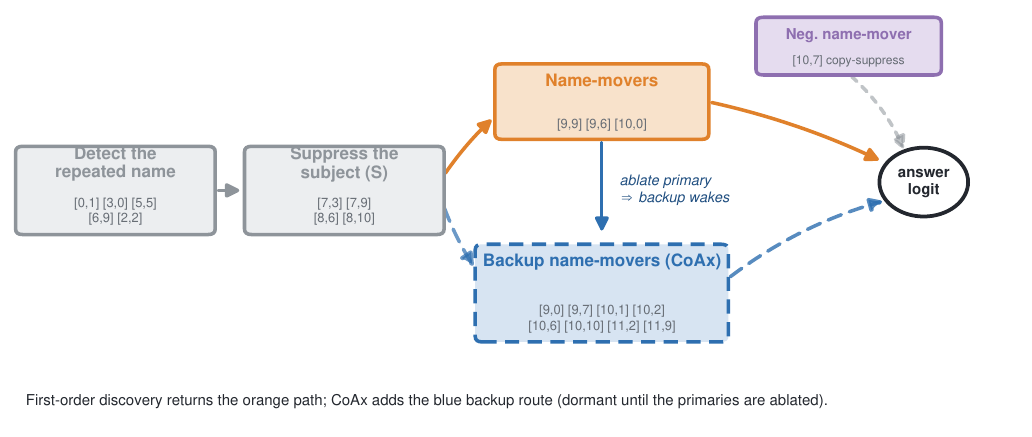}
\caption{\textbf{The IOI circuit and the self-repair backups \coax recovers.} A first-order analysis
returns the orange path: duplicate-token / induction / previous-token heads feed the S-inhibition heads,
which gate the primary name-movers that write the IO name to the logits. \coax adds the blue path --
the backup name-movers, dormant on the intact model, form a parallel route to the IO logit that activates
once the primaries are ablated, which is what makes the circuit self-repair. The negative name-mover
$[10,7]$, which attends to the IO token but \emph{suppresses} it (copy suppression), is correctly held
separate. Head identities for the documented groups follow \citet{wang2022ioi}; positions are functional
(information-flow stage), not $(\text{layer},\text{head})$.}
\label{fig:circuit}
\end{figure}

The diagram makes the self-repair structure explicit: ablating the orange (primary) path alone leaves the
blue (backup) path intact to restore the output -- the reason a first-order knockout of ``the circuit''
barely dents behavior (\S\ref{sec:fcm}). \coax recovers the blue path label-free.

\begin{figure}[ht]\centering
\includegraphics[width=0.92\textwidth]{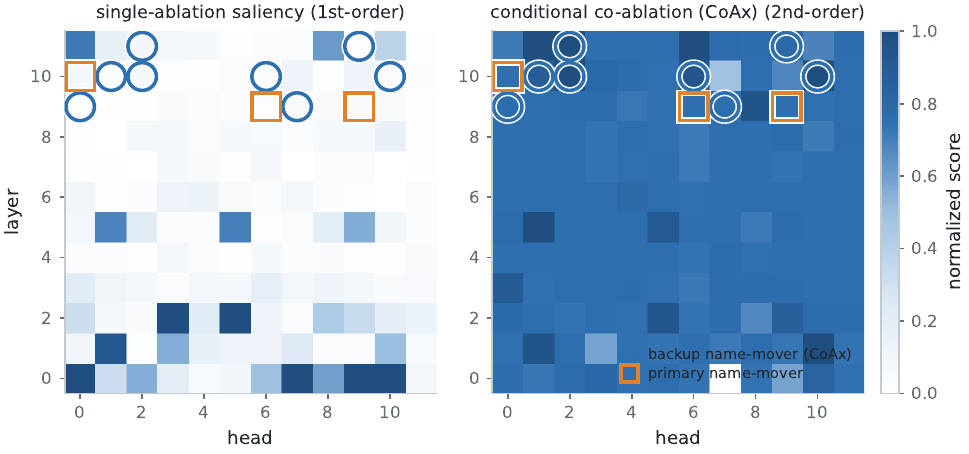}
\caption{\textbf{Where the circuit lives, and the first-order/second-order inversion.} The $12\times12$
head grid scored by first-order single-ablation saliency (left) and by the conditional co-ablation \coax
score (right); primary name-movers (orange squares) and \coax-recovered backups (blue circles) are
marked. Both bands sit in layers $9$--$11$, but they separate by \emph{signal}: the backups are dark
(low) under first-order saliency -- they are dormant -- and \emph{light up} under conditional co-ablation,
the exact inversion \coax exploits. The method is also spatially specific: the high first-order saliency
in the early layers (lower-left) is not name-mover self-repair, and \coax does not promote it.}
\label{fig:headmap}
\end{figure}

Together the two views give a complete picture of one discovered circuit: \emph{what} the backups do (a
parallel write path to the answer), \emph{where} they are (the late name-mover band), and \emph{why}
first-order scoring misses them (they are dormant until the primaries are removed).

\subsection{What the CoAx score keys on}\label{app:keys-group}

\subsubsection{Component ablation of the \coax score}\label{app:components}
We ablate the two ingredients that separate \coax from a first-order score in turn
(Table~\ref{tab:design}). \emph{Conditioning} is the dominant lever: a node-additive single-ablation
saliency reaches only $0.33$ backup AUC, and conditioning the score on the primaries -- reading the
\emph{growth} in ablation effect (Eq.~\ref{eq:comp}) -- lifts it to $0.80$ even with the plain
uncentered feature. \emph{Centering} against the output distribution then adds the rest,
$0.80\to0.91$; Fisher weighting on top is a consistent but small gain ($+0.01$). The increments are
cumulative and each is far larger than the seed spread ($\le 0.04$), so neither is an arbitrary choice.
\begin{table}[ht]\centering\small
\caption{\textbf{Component ablation of \coax} (backup ROC-AUC, GPT-2-small). Conditioning on the
primaries is the dominant lever ($0.33\to0.80$); centering against the output distribution adds the
remainder ($0.80\to0.91$); Fisher weighting is a small consistent gain. The first-order single-ablation
row is the model with conditioning removed.}
\label{tab:design}
\begin{tabular}{lc}
\toprule
\rowcolor{hdr}
score & backup AUC \\
\midrule
first-order single ablation \scriptsize{(no conditioning)} & 0.33 \\
\midrule
\coax, conditional, plain $L_2$ feature        & 0.80 \\
\coax, conditional, $+$centering               & 0.90 \\
\coax, conditional, $+$Fisher (no centering)   & 0.80 \\
\rowcolor{ourrow}
\coax, conditional, $+$centering $+$Fisher (full) & \best{0.91} \\
\bottomrule
\end{tabular}
\end{table}
We also varied $r \in \{96, 192, 384\}$ and found backup AUC stable within $0.01$.

\subsubsection{Empirical non-additivity of the name-mover module}\label{app:superadd}
Proposition~\ref{prop:blind} predicts that a redundant circuit's importance is \emph{non-additive}: the
damage from ablating a set exceeds the sum of the per-unit damages. We confirm this directly on the
GPT-2-small IOI name-mover module (Table~\ref{tab:superadd}). Ablating the three name-mover primaries
\emph{jointly} drops the IOI logit-difference by only $0.11$ (clean $2.53$) -- the self-repair headline --
yet the full module (primaries $+$ the eight backups) drops it by $1.15$, \emph{nearly twice} the $0.60$
that summing the eleven single-head ablations predicts (super-additivity $1.9\times$) and $10\times$ the
primaries-alone effect. A node-additive score therefore sees a small fraction of the module's true joint
importance. The super-additivity is specific to the redundant module: a size-matched random set is
additive (joint $\approx$ sum), and the individual primaries are mildly \emph{sub}-additive ($0.6\times$),
exactly because their backups absorb each single ablation.
\begin{table}[ht]\centering\small
\setlength{\tabcolsep}{8pt}
\caption{\textbf{Joint-vs-summed ablation} on GPT-2-small IOI (logit-difference drop, clean $2.53$).
The name-mover module is super-additive: its joint ablation far exceeds the sum of single-head ablations,
which is what a first-order score reads. A size-matched random set is additive.}
\label{tab:superadd}
\begin{tabular}{lccc}
\toprule
\rowcolor{hdr}
ablated set & $\sum$ single & joint & ratio \\
\midrule
$3$ name-mover primaries        & 0.19 & 0.11 & $0.6\times$ \\
$8$ backups                     & 0.41 & 0.35 & $0.9\times$ \\
\rowcolor{ourrow}
primaries $+$ backups (module)  & 0.60 & \best{1.15} & $\mathbf{1.9\times}$ \\
random, size-matched            & \multicolumn{2}{c}{additive} & $\approx 1\times$ \\
\bottomrule
\end{tabular}
\end{table}

\subsection{Generalization, completion, and scope}\label{app:gen-group}

\subsubsection{Induction generalization across scale}
Table~\ref{tab:induction} reports the induction-completion attribution factor at each GPT-2 scale.
\begin{table}[ht]\centering\small
\setlength{\tabcolsep}{3pt}
\caption{Induction generalization, label-free, on three scales. The discovered compensators are
load-bearing once the primaries are gone (conditional drop far above matched random) and recover a
self-repair-masked effect (attribution factor). The output-norm wake-up is weaker than IOI's $1.21$;
induction backups take over functionally, so the conditional causal drop is the signature that
transfers.}
\label{tab:induction}
\begin{tabular}{lccc}
\toprule
\rowcolor{hdr}
 & GPT-2-sm & GPT-2-md & GPT-2-lg \\
\midrule
conditional drop, discovered & 0.89 & 0.17 & 0.07 \\
conditional drop, random     & 0.05 & 0.04 & $\approx$0 \\
primaries-only drop          & 0.27 & 0.20 & 0.25 \\
$+$discovered drop           & 8.5  & 8.1  & 1.62 \\
$+$random drop               & 0.81 & 0.43 & 0.29 \\
attribution factor           & $32\times$ & $40\times$ & $6.5\times$ \\
over matched-random          & $10\times$ & $19\times$ & $5.7\times$ \\
activation ratio             & 1.04 & 1.06 & 1.10 \\
\bottomrule
\end{tabular}
\end{table}
With \emph{detected} rather than documented primaries on GPT-2-small the result is robust: conditional
drop $1.86$ versus $0.26$ random, attribution $6.6\times$ and $4\times$ over matched-random, confirming
independence from the exact primary list.

\paragraph{Cross-architecture induction completion.}
To test that \emph{conditional completion} -- not just the output-space geometry of
Appendix~\ref{app:xscale} -- generalizes beyond GPT-2, we run the fully label-free pipeline (detect
induction-head primaries by their own effect, seed \coax on them, recover a compensating set) on eight
models spanning six architecture families beyond GPT-2 (the main-text dumbbell,
Figure~\ref{fig:generalization}b, plots $+$\coax versus $+$random; Table~\ref{tab:indcross} gives the full
numbers). On every model the \coax-completed circuit is far more causally complete than the primary-only
circuit and than a matched random completion, with the \coax attribution factor ranging from $2.1\times$
(Pythia-160M) to $12\times$ (Pythia-410M). The ordering $+$\coax $>$ $+$random holds on all eight models, so
conditional completion is not a GPT-2-specific phenomenon.

\paragraph{Is completion \texorpdfstring{\coax}{CoAx}-specific, or does ``more induction heads'' suffice? A \texorpdfstring{$+$}{+}own control.}
The weak point of the $+$random control is that random heads barely move the metric. We therefore add a
stronger control, $+$\emph{own}: extend the circuit by the model's \emph{own} next-strongest induction
heads (ranked by the same attention-offset detector that finds the primaries), matched in size to the
discovered set -- the induction analogue of the IOI $+$own knockout control. The result is honest and
informative (Table~\ref{tab:indcross}). \emph{Both} $+$\coax and $+$own greatly exceed $+$random on every
model -- induction is genuinely redundant across many \emph{homogeneous} heads, so adding more of them does
complete the circuit -- and the two are comparable: \coax recovers a strictly larger drop than $+$own on
\emph{four} of eight models (decisively on Llama-3.1-8B, $5.7\times$ vs $1.9\times$, and Qwen2.5-7B) and a
smaller one on the other four. This is exactly what the regime account of \S\ref{sec:gen} predicts:
induction's redundancy is shared among co-firing homogeneous heads, so ``more of the same kind'' completes
the circuit much as \coax does. It is the opposite of IOI, whose backups are a \emph{distinct, dormant}
class -- there the model's own next heads instead \emph{overshoot} into the core circuit
(Table~\ref{tab:unlearn}, $+$own knockout accuracy $0.24$ vs \coax $0.70$). The discriminating ``right
heads, not more heads'' result is therefore IOI's; on induction we claim only that \coax recovers a
\emph{load-bearing} compensating set label-free, with $+$random as the floor confirming the set is far from
arbitrary.

\begin{table}[ht]\centering\small
\setlength{\tabcolsep}{5pt}
\caption{Cross-architecture induction completion, fully label-free (detected primaries). Recovered
induction log-prob drop when each selector's heads are added to the primary ablation (over-primary factor
in parentheses; $+$\coax / $+$own bold the larger). \emph{Both} \coax and the model's own next induction
heads ($+$own) far exceed a matched-random completion on all eight models -- induction is redundant across
homogeneous heads -- and the two are comparable, neither dominating. The discriminating identity test is
IOI's knockout (Table~\ref{tab:unlearn}), where $+$own overshoots into the core circuit while \coax does not.}
\label{tab:indcross}
\begin{tabular}{lcccc}
\toprule
\rowcolor{hdr}
model & prim.-only & $+$\coax & $+$own & $+$random \\
\midrule
Pythia-160M  & 4.61 & 9.75 ($2.1\times$)  & \best{11.96 ($2.6\times$)} & 6.37 ($1.4\times$) \\
Pythia-410M  & 0.87 & \best{10.46 ($12.1\times$)} & 7.56 ($8.7\times$) & 0.94 ($1.1\times$) \\
Pythia-1.4B  & 1.67 & \best{8.37 ($5.0\times$)}  & 6.19 ($3.7\times$) & 2.09 ($1.2\times$) \\
GPT-Neo-1.3B & 1.59 & 3.84 ($2.4\times$)  & \best{12.34 ($7.8\times$)} & 2.32 ($1.5\times$) \\
Gemma-2-2B   & 2.85 & 7.43 ($2.6\times$)  & \best{10.05 ($3.5\times$)} & 3.52 ($1.2\times$) \\
Qwen2.5-7B   & 0.51 & \best{2.88 ($5.6\times$)}  & 2.13 ($4.2\times$) & 0.55 ($1.1\times$) \\
OLMo-2-7B    & 0.78 & 3.29 ($4.2\times$)  & \best{4.48 ($5.8\times$)} & 0.99 ($1.3\times$) \\
Llama-3.1-8B & 1.02 & \best{5.85 ($5.7\times$)}  & 1.98 ($1.9\times$) & 1.37 ($1.3\times$) \\
\bottomrule
\end{tabular}
\end{table}

\subsubsection{Completion of automatically discovered circuits}\label{app:completion}
The headline seeds \coax with documented primaries; the completion experiment seeds it with the
primaries an actual first-order method returns. We take the top-$3$ heads of each finder (AtP, EAP-IG,
AtP$^\star$) as the seed $\Sset$, run \coax conditioned on $\Sset$, add its top-$4$ positive-direction
backups, and measure the joint-ablation IOI logit-difference drop (clean $\approx 2.6$). Per-seed
numbers (seeds $42$, $1$) are in Table~\ref{tab:completion-seeds}; the main text reports their mean.

\begin{table}[ht]\centering\small
\setlength{\tabcolsep}{5pt}
\caption{Completion module, per seed. ``$+$own'' completes with the finder's own next-$4$ ranked heads;
``rank\%'' is the mean first-order percentile of the \coax-added heads ($1$=top); ``wk'' their mean
output-norm wake-up ratio. \coax roughly doubles the primary-only drop and far exceeds random, and is
tied with $+$own on faithfulness while recruiting lower-ranked heads (rank\% $0.69$ vs $\approx0.97$).}
\label{tab:completion-seeds}
\begin{tabular}{llcccccc}
\toprule
\rowcolor{hdr}
seed & finder & prim. & $+$\coax & $+$own & $+$rand & \coax rank\% & \coax wk \\
\midrule
\multirow{3}{*}{42} & AtP         & 1.26 & 3.17 & 2.99 & 1.77 & 0.89 & 1.05 \\
                    & EAP-IG      & 0.75 & 1.73 & 1.75 & 1.16 & 0.82 & 1.02 \\
                    & AtP$^\star$ & 1.90 & 3.75 & 3.94 & 2.18 & 0.64 & 1.03 \\
\midrule
\multirow{3}{*}{1}  & AtP         & 1.30 & 3.06 & 3.19 & 1.61 & 0.70 & 1.02 \\
                    & EAP-IG      & 0.61 & 1.69 & 1.74 & 0.74 & 0.81 & 1.02 \\
                    & AtP$^\star$ & 1.81 & 3.72 & 3.60 & 2.32 & 0.26 & 1.03 \\
\bottomrule
\end{tabular}
\end{table}

We are deliberately precise about what this shows. The completion gain is robust to finder and seed and
always far exceeds random, so it is not an artifact of ablating more heads. On raw faithfulness it is
tied with the first-order top-up. The \coax-added heads sit at a lower mean first-order percentile
($0.69$) than the next-$4$ heads ($\approx 0.97$), so \coax does reach functional compensators a top-up
would skip; but their wake-up ratio is $\approx 1.0$ (not the $1.15$--$1.21$ of the documented-primary
setting), and recall of documented backups is $0/4$. We therefore do \emph{not} claim that completing an
automatically discovered circuit isolates cleanly dormant backups. When the seed is a noisy first-order
guess rather than the documented primaries, \coax recruits mid-ranked compensators of \emph{that} seed;
the clean low-saliency, high-wake-up backup result holds in the controlled documented-primary setting
(Table~\ref{tab:backup}, \S\ref{app:mechanism}), and completion quality tracks seed quality exactly as
\S\ref{app:seedrobust} predicts. We present completion as evidence \coax operates end-to-end and
label-free, not as a backup-identity claim under noisy seeds.

\subsubsection{Scope boundary: greater-than and a preliminary FFN-group probe}\label{app:ffn}
On the MLP-dominated greater-than circuit~\citep{hanna2023greaterthan}, head-level \coax does not
recover a clean compensating set: the discovered heads show no wake-up (activation ratio $1.01$ versus
$1.00$) and are not more load-bearing than random (conditional probability-difference drop near zero).
This is consistent with greater-than's self-repair being mediated by MLPs that our head-level units do
not observe.

Because \coax is unit-agnostic, we ran a preliminary FFN-group instantiation: a unit is a contiguous
slice of a layer's MLP intermediate neurons ($96$ groups on GPT-2-small), ablated by zeroing that slice
of the layer's down-projection input; we detect primary FFN groups label-free by their own
probability-difference (pd) drop and seed \coax on them. The mechanics transfer -- the score is
well-defined and computable on FFN units -- but the \emph{signal does not strengthen}: the discovered
compensating set has conditional pd-drop $0.176$ versus $0.115{\pm}0.094$ for a matched random set, only
$1.5\times$ over random and within one standard deviation, far weaker than the strong self-repair
recovery on IOI and the $2.1$--$12\times$ over-random factors of cross-architecture induction. We read this as evidence that greater-than is weakly
self-repairing at \emph{both} head and FFN-group granularity -- a property of the circuit, not an artifact
of the unit choice -- making it a clean scope boundary: \coax recovers redundancy where it exists, and
greater-than has little. Demonstrating a \emph{strong} FFN-group recovery requires an MLP-mediated circuit
with documented redundancy, which we leave to future work.

\subsection{Robustness}\label{app:robust-group}

\subsubsection{Robustness to the primary seed}\label{app:seedrobust}
\coax discovers backups \emph{conditional on} a primary seed $S$, so we ask two questions: how
sensitive is the result to $S$, and can $S$ itself be obtained label-free? We sweep $|S|=1,2,3$ and
compare three ways of choosing $S$: the documented name-movers (ordered by their own energy), a
label-free \emph{task-directed} seed (top heads by AtP on the IO-vs-S logit, which uses the task but no
head labels), and a label-free \emph{undirected} seed (top heads by unconditional ablation energy).
Table~\ref{tab:seedrobust} reports backup AUC, averaged over two prompt seeds.

\begin{table}[ht]\centering\small
\setlength{\tabcolsep}{5pt}
\caption{Backup-discovery AUC versus the primary seed $S$ (mean of two prompt seeds). Given the true
primaries the result is stable for $|S|\!\geq\!2$; a single primary already gives $0.80$. A label-free
task-directed (AtP) seed is far above the undirected energy-top seed but still below the true primaries
-- the seed must functionally be the primary circuit, which is exactly what first-order attribution
provides.}
\label{tab:seedrobust}
\begin{tabular}{lccc}
\toprule
\rowcolor{hdr}
seed $S$ & $|S|{=}1$ & $|S|{=}2$ & $|S|{=}3$ \\
\midrule
\rowcolor{ourrow}
documented primaries        & 0.80 & \best{0.93} & 0.91 \\
AtP-detected (label-free)   & 0.50 & 0.60 & 0.46 \\
energy-top (label-free)     & 0.30 & 0.33 & 0.31 \\
\midrule
\multicolumn{4}{l}{\footnotesize first-order floor (single-ablation saliency): $0.33$}\\
\bottomrule
\end{tabular}
\end{table}

Two conclusions. \emph{(i) Robustness:} once $S$ contains the true primaries the AUC is stable
($0.91$--$0.93$ for $|S|\!\geq\!2$), and even one primary suffices for $0.80$ -- the discovery does not
depend on the exact primary list. \emph{(ii) Honest scope:} the seed must capture the \emph{functional}
primaries. An undirected energy-top seed fails ($0.30$, below the first-order floor) because high-energy
heads are not the name-movers; a task-directed AtP seed does much better ($0.60$) but still trails,
because self-repair mutes the primaries' own first-order signal so a purely automatic finder recovers
them only partially. \coax is therefore not a fully unsupervised black box: it \emph{completes} a
first-order-identified primary circuit by returning the backups that circuit hides. This is the
intended use and the source of the headline number, which seeds \coax with the documented primaries.

\subsubsection{Template robustness}\label{app:template}
The headline uses one IOI surface template. Recomputing the backup AUC on two alternative surface forms
that exercise the same indirect-object name-moving circuit (``After \dots\ arrived at the \dots, \dots
handed a \dots\ to'' and ``While \dots\ were at the \dots, \dots passed a \dots\ to'') gives backup AUC
$0.96$ and $0.88$, versus $0.91$ for the standard template. The discovery is not an artifact of one
prompt form.

\clearpage
\section{Attribution, Full Results}\label{app:attribution}
Ablating the name-mover primaries alone drops the IOI logit-difference by $0.22$ on average over four
seeds, from a clean $2.53$ (and $0.11$ at the single seed shown in the main-text figure). Adding the
label-free discovered backups recovers $1.76$, roughly eightfold, exceeding the documented backups
($1.15$) and a matched random-set control over $20$ draws ($1.0{\pm}0.7$). The discovered set beats the
documented backups at every seed, and $3$ to $4$ of the eight discovered heads match the documented
list. The random control has high variance because a random eight-head set occasionally hits real
circuit heads; \coax beats it in the mean at every seed.

\paragraph{Relation to behavior-faithfulness.}
On the MIB-style faithfulness curve, where one keeps the top-$k$ ranked heads, mean-ablates the rest,
and integrates the recovered logit-difference, the label-using AtP leads our label-free \coax order
(circuit-performance ratio CPR $0.29$ versus $0.22$; EAP-IG $0.04$), as expected for a behavior-agnostic primitive. The point is
the converse: the same AtP that wins faithfulness \emph{misses the backups} ($0.59$ versus $0.91$),
because removing a redundant backup barely changes the behavior a faithfulness curve rewards. The two
measurements are complementary, and the backups a faithfulness-greedy method discards are exactly what
\coax is for.

\paragraph{Capability removal, per seed.}\label{app:unlearn}
Table~\ref{tab:unlearn-seeds} gives the per-seed IOI behavioral accuracy behind the main-text knockout
result. Ablating the documented name-mover primaries leaves accuracy at $0.96$--$1.00$ (self-repair
holds at every seed); adding the \coax backups is what degrades it, to $0.62$--$0.85$ (mean $0.70$),
matching the documented-backup oracle ($0.69$--$0.75$, mean $0.72$) and below the matched random control
($0.73$--$0.89$, mean $0.81$). A first-order top-up of the same size ($+$own, the next-$k$ heads by
single-ablation saliency) instead drives accuracy to $0.23$--$0.24$ (mean $0.24$), well below the $0.5$
chance level: it over-ablates into the core name-movers rather than selecting the compensating set. The
accuracy under \coax does not reach $0.5$ because IOI retains redundancy beyond the name-mover family
(S-inhibition and induction heads still bias the answer); the result is about the \emph{ordering} --
$+$own (over-ablates) $<$ \coax $\approx$ documented $<$ random $<$ primaries-only -- which shows the
primary circuit is an incomplete knockout set, \coax supplies exactly the missing backups label-free, and
a first-order top-up does not (it removes the wrong heads).

\begin{table}[ht]\centering\small
\setlength{\tabcolsep}{6pt}
\caption{IOI behavioral accuracy ($\text{logit[IO]}>\text{logit[S]}$) under ablation, per seed.}
\label{tab:unlearn-seeds}
\begin{tabular}{lcccccc}
\toprule
\rowcolor{hdr}
seed & clean & $-$prim. & $+$\coax & $+$own & $+$rand & $+$doc. \\
\midrule
42 & 1.00 & 1.00 & 0.62 & 0.24 & 0.89 & 0.73 \\
1  & 1.00 & 0.96 & 0.66 & 0.23 & 0.82 & 0.75 \\
8  & 0.99 & 0.97 & 0.69 & 0.24 & 0.73 & 0.70 \\
22 & 1.00 & 0.97 & 0.85 & 0.23 & 0.81 & 0.69 \\
\midrule
\rowcolor{ourrow}
mean & 1.00 & 0.97 & \best{0.70} & 0.24 & 0.81 & \secnd{0.72} \\
\bottomrule
\end{tabular}
\end{table}

\clearpage
\section{Pruning and Cross-Scale Geometry, Full Results}\label{app:pruning}

\subsection{Protocol}
We calibrate the co-ablation energy on WikiText-2 train windows without labels, prune the
least-important heads first at matched sparsity, and evaluate WikiText-2 test perplexity and zero-shot
accuracy. The orders are random, magnitude, Wanda~\citep{sun2023wanda}, gradient
Taylor~\citep{molchanov2019taylor}, co-ablation energy in static order, and the self-repair-aware
sequential order. The Taylor baseline requires a backward pass; for the $7$B model it exceeds memory in
our environment and is omitted there, while the other baselines are retained.

\paragraph{The baselines are head-native and isolate the contribution.} Two of these orders double as
the head-native controls a reviewer should demand. The \emph{magnitude} order is exactly the
\emph{head output-norm} baseline (it ranks heads by their mean attention-output-slice norm), so beating
it rules out ``\coax just keeps high-norm heads''. The \emph{co-ablation static} order is exactly the
\emph{single-ablation Fisher-diagonal} baseline (it ranks heads by the unconditional energy
$\energy(\dz_u\mid\emptyset)$, the diagonal of $\Hkern$), so the gap between it and the self-repair-aware
sequential order isolates the value of \emph{conditioning} from that of using a better diagonal score:
both use the identical signal and differ only in conditional re-measurement. Wanda and Taylor are the
weight-magnitude and gradient head-native baselines respectively.

\subsection{Perplexity: GPT-2-small, full sweep}
Table~\ref{tab:ppl-full} lists the WikiText-2 perplexity of every pruning order at every sparsity on GPT-2-small.
\begin{table}[ht]\centering\small
\setlength{\tabcolsep}{3pt}
\caption{WikiText-2 perplexity on GPT-2-small (dense $47.9$) across the full $10$ to $70\%$ head-sparsity
sweep, all six orders. Co-ablation energy stays near dense while magnitude, Wanda, and Taylor diverge;
the self-repair-aware order improves on the static order at every budget, widening with sparsity.}
\label{tab:ppl-full}
\begin{tabular}{lccccccc}
\toprule
\rowcolor{hdr}
order \scriptsize{(PPL\,$\downarrow$)} & 10 & 20 & 30 & 40 & 50 & 60 & 70 \\
\midrule
random    & 78 & 109 & 89 & 214 & 373 & 417 & 538 \\
magnitude & 1428 & 1016 & 4884 & 164 & 247 & 239 & 443 \\
Wanda     & 51 & 609 & 2810 & 192 & 337 & 548 & 664 \\
Taylor    & 50 & 59 & 70 & 97 & 201 & 568 & 23155 \\
co-abl.\ static & 50 & 56 & 65 & 85 & 113 & 149 & 353 \\
\rowcolor{ourrow}
\coax     & \best{50} & \best{54} & \best{60} & \best{70} & \best{81} & \best{105} & \best{172} \\
\bottomrule
\end{tabular}
\end{table}

\paragraph{On the non-monotonic baseline curves.}
The magnitude and Wanda rows are \emph{non-monotonic} in sparsity (Wanda goes $51, 609, 2810, 192,
337$ over $10$ to $50\%$), and the dips reproduce across runs. We do not interpret these dips
mechanistically; they show that magnitude and Wanda, adapted to head pruning, give \emph{unstable}
orders, which is precisely why we report full sparsity curves rather than a single budget. The
co-ablation orders are by contrast smoothly monotonic. Taylor is monotonic and far stronger than
magnitude or Wanda, yet still trails co-ablation and collapses at $70\%$.

\subsection{Perplexity: cross-model, full sweep}
Table~\ref{tab:ppl-cross} extends the full perplexity sweep to the larger models.
\begin{table}[ht]\centering\small
\setlength{\tabcolsep}{2.6pt}
\caption{WikiText-2 perplexity across the full $10$ to $70\%$ head-sparsity sweep on three further
scales, all six orders (Qwen-7B is also plotted in Figure~\ref{fig:apps}c; GPT-2-small in (b)). Best per
column in bold. Taylor needs a backward pass and exceeds memory on Qwen-7B in our environment.
Co-ablation energy dominates every weight/magnitude/gradient baseline at every budget; the
self-repair-aware order adds a further refinement that grows with scale and high sparsity, while on the
small Pythia model the static order is already near-optimal and the two essentially tie.}
\label{tab:ppl-cross}
\begin{tabular}{lccccccc}
\toprule
\rowcolor{hdr}
order \scriptsize{(PPL\,$\downarrow$)} & 10 & 20 & 30 & 40 & 50 & 60 & 70 \\
\midrule
\rowcolor{hdr}
\multicolumn{8}{l}{\textit{GPT-2-large} \,(dense 29.6)}\\
random & 31 & 34 & 40 & 45 & 117 & 277 & 488 \\
magnitude & 30 & 32 & 38 & 51 & 77 & 124 & 184 \\
Wanda & 31 & 37 & 63 & 88 & 210 & 316 & 653 \\
Taylor & 30 & 33 & 36 & 41 & 45 & 55 & 78 \\
co-abl.\ static & \best{$30$} & 31 & 32 & 36 & 44 & 55 & 91 \\
\rowcolor{ourrow}
\coax & 30 & \best{$31$} & \best{$32$} & \best{$34$} & \best{$37$} & \best{$42$} & \best{$52$} \\
\midrule
\rowcolor{hdr}
\multicolumn{8}{l}{\textit{Pythia-1.4B} \,(dense 21.9)}\\
random & 25 & 38 & 108 & 158 & 491 & 1155 & 1302 \\
magnitude & 23 & 24 & 28 & 78 & 178 & 217 & 459 \\
Wanda & \best{$22$} & 25 & 149 & 171 & 356 & 450 & 558 \\
Taylor & 23 & 24 & 26 & 31 & 41 & 55 & 99 \\
co-abl.\ static & 23 & 24 & 25 & 30 & 38 & \best{$46$} & \best{$66$} \\
\rowcolor{ourrow}
\coax & 23 & \best{$23$} & \best{$25$} & \best{$29$} & \best{$35$} & 47 & 73 \\
\midrule
\rowcolor{hdr}
\multicolumn{8}{l}{\textit{Qwen-2.5-7B} \,(dense 16.2)}\\
random & 46 & 173 & 601 & 1195 & 7544 & 18298 & 119377 \\
magnitude & 49 & 66 & 123 & 29613 & 65157 & 80280 & 79277 \\
Wanda & 46 & 66 & 29709 & 66182 & 79750 & 80023 & 78816 \\
Taylor & -- & -- & -- & -- & -- & -- & -- \\
co-abl.\ static & \best{$20$} & 27 & 31 & 50 & 91 & 426 & 1341 \\
\rowcolor{ourrow}
\coax & 21 & \best{$24$} & \best{$29$} & \best{$35$} & \best{$54$} & \best{$92$} & \best{$240$} \\
\bottomrule
\end{tabular}
\end{table}
The self-repair-aware gain over the static co-ablation order grows with scale and sparsity: it is
negligible on Pythia-1.4B (the static order is already near-optimal, and at $60$--$70\%$ the two tie
within noise), moderate on GPT-2-large ($37$ vs.\ $44$ at $50\%$, $52$ vs.\ $91$ at $70\%$), and largest
on Qwen-7B ($54$ vs.\ $91$ at $50\%$, $240$ vs.\ $1341$ at $70\%$), consistent with larger models
carrying more backup redundancy to preserve.

\subsection{Zero-shot accuracy across sparsities}
The main-text figure plots the three-seed accuracy curves of PIQA, ARC-Easy, and HellaSwag to $50\%$.
Across every confirmed budget the co-ablation orders dominate magnitude, Wanda, and Taylor, which fall
toward the chance levels (PIQA $50$, ARC-E and HSw $25$); the self-repair-aware order helps or ties on
every model-task cell, with seven of nine strict wins at $50\%$. That the conditional re-measurement never
\emph{hurts} accuracy, on any model-task cell, is the practical form of the paper's claim: anticipating what
a backup will do once its primary is gone can only sharpen a removal order, never coarsen it, so the
self-repair-aware variant is a safe default rather than a tuned one.

\subsection{Module-partition negative}
Cutting the curvature graph into signed eigen-modules and pruning within-module does not beat flat
second-order pruning (Qwen-2.5-7B at $30\%$: $29.4$ versus $28.4$ flat; both collapse by $50\%$),
because flat pruning already accumulates the full off-diagonal interaction while a hard partition
discards it. The graph's compression value is in the self-repair-aware ranking, not a partition. We
report this as a second honest negative.

\subsection{Cross-scale geometry}\label{app:xscale}
Table~\ref{tab:main} gives the per-circuit cluster-AUC on GPT-2-small (the second-order/co-activation
contrast in detail), and Table~\ref{tab:xscale} gives the per-model \textsc{vs-active} ROC-AUC behind
Figure~\ref{fig:xscale}.

\begin{table}[ht]\centering\small
\setlength{\tabcolsep}{4pt}
\caption{\textbf{Cluster-AUC} (pair-level same-circuit, \textsc{vs-active}) on GPT-2-small, mean over
$4$ seeds. \textbf{1st}: single-ablation affinity; \textbf{2nd}: pairwise synergy; \textbf{act}:
co-activation; \textbf{wt}: projection-kernel weight subspace~\citep{yamagiwa2026projection}. Synergy wins
on movement circuits over both input-side controls; co-activation wins co-located writing heads. This is
the cluster-AUC, not the backup-AUC of Table~\ref{tab:backup} -- the low backup-NM synergy entry is a
same-circuit score, unrelated to the $0.91$ backup-discovery result.}
\label{tab:main}
\begin{tabular}{lccccccc}
\toprule
\rowcolor{hdr}
 & dup-token & induction & prev-token & name-mover & neg-NM & s-inhib & backup-NM \\
\midrule
1st (single) & 0.88 & 0.73 & 0.68 & 0.34 & \best{0.98} & 0.62 & 0.43 \\
\rowcolor{ourrow}
2nd (synergy) & \best{0.97} & \best{0.94} & \best{0.87} & 0.76 & 0.85 & 0.64 & 0.29 \\
act & 0.32 & 0.75 & 0.61 & \best{0.90} & 0.97 & \best{0.86} & 0.76 \\
wt & 0.59 & 0.89 & 0.73 & 0.79 & 0.96 & 0.74 & \best{0.79} \\
\bottomrule
\end{tabular}
\end{table}

\begin{figure}[tbp]\centering
\includegraphics[width=0.47\textwidth]{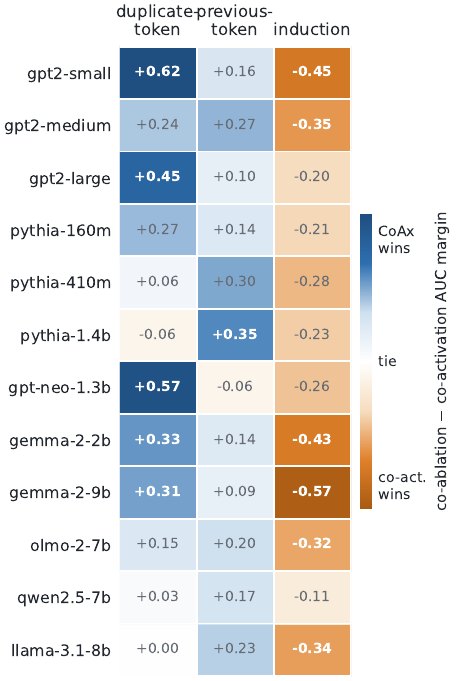}
\caption{\textbf{Cross-scale, cross-architecture geometry across twelve models.} Each cell is the margin
(co-ablation $-$ co-activation \textsc{vs-active} ROC-AUC) on a movement circuit: blue where the
output-grounded co-ablation lens wins, orange where the input-side co-activation lens wins. The pattern is
consistent across scale and architecture and is set by the circuit \emph{mechanism}, not the model --
co-ablation wins the output-movement circuits (duplicate-/previous-token) on $11/12$ and $10/12$ models,
while co-activation wins induction on all $12$, whose heads trivially co-fire under repeated-token
calibration. This is a \emph{same-circuit clustering} test, distinct from the conditional
induction-\emph{completion} experiment below; the induction reversal reflects co-firing, not a completion
failure.}
\label{fig:xscale}
\end{figure}
\begin{table}[ht]\centering\small
\setlength{\tabcolsep}{3.2pt}
\caption{Per-model movement-circuit \textsc{vs-active} ROC-AUC. To avoid any per-cell signal selection,
the co-ablation column (co) is a \emph{single fixed} signal -- the second-order pairwise synergy, the
genuine contribution of this work -- compared to co-activation (act), for duplicate-token (dup),
previous-token (prev), and induction (ind). The winning lens in each pair is in bold. Pairwise synergy
wins dup on $11/12$ models (exception Pythia-1.4B) and prev on $10/12$ (exceptions GPT-2-large and
GPT-Neo-1.3B); co-activation wins ind on all $12$ under repeated-token calibration.}
\label{tab:xscale}
\begin{tabular}{lcccccc}
\toprule
\rowcolor{hdr}
 & \multicolumn{2}{c}{dup} & \multicolumn{2}{c}{prev} & \multicolumn{2}{c}{ind} \\
\rowcolor{hdr}
model & co & act & co & act & co & act \\
\midrule
Pythia-160M  & \best{.65} & .38 & \best{.56} & .42 & .43 & \best{.94} \\
Pythia-410M  & \best{.62} & .56 & \best{.66} & .36 & .53 & \best{.95} \\
GPT-2-sm     & \best{.98} & .37 & \best{.68} & .52 & .48 & \best{.94} \\
GPT-2-md     & \best{.54} & .46 & \best{.64} & .39 & .52 & \best{.98} \\
GPT-2-lg     & \best{.84} & .39 & .60 & \best{.67} & .79 & \best{.99} \\
GPT-Neo-1.3B & \best{.94} & .37 & .45 & \best{.77} & .40 & \best{.97} \\
Pythia-1.4B  & .55 & \best{.61} & \best{.64} & .28 & .39 & \best{.83} \\
Gemma-2-2B   & \best{.72} & .43 & \best{.60} & .46 & .27 & \best{.87} \\
OLMo-2-7B    & \best{.86} & .71 & \best{.56} & .36 & .40 & \best{.86} \\
Qwen2.5-7B   & \best{.65} & .62 & \best{.41} & .24 & .71 & \best{.93} \\
Llama-3.1-8B & \best{.69} & .68 & \best{.54} & .31 & .58 & \best{.92} \\
Gemma-2-9B   & \best{.96} & .65 & \best{.43} & .40 & .26 & \best{.89} \\
\bottomrule
\end{tabular}
\end{table}
Pairwise synergy wins duplicate-token on $11/12$ models (the exception is Pythia-1.4B) and
previous-token on $10/12$ (exceptions GPT-2-large and GPT-Neo-1.3B), while co-activation wins induction
on all $12$. The first-order co-ablation affinity shows the same qualitative split (output-side wins
movement, loses induction) but is weaker on duplicate-token, so we report the second-order signal as
the principled fixed choice. The reversals occur where the empirically-detected circuit is small or
noisy, making the \textsc{vs-active} split unstable; they are not concentrated at any scale. The
pattern, output-grounded synergy for movement-by-output circuits and input-grounded co-activation for
co-firing induction under
repeated-token calibration, is the output-versus-input complementarity the paper argues for.

\clearpage
\section{Limitations, Broader Impact, and Reproducibility}\label{app:limitations}
\subsection{Limitations}
The explicit pairwise route is $O(|\Uset|^2)$, mitigated by candidate sets and the $O(|\Uset|)$
conditional route. Head-level backup ground truth exists mainly for GPT-2-small's IOI circuit, so the
labeled backup AUC is anchored there. This is a field-wide constraint rather than a quirk of our setup:
the community's hand-verified circuits remain a small set (IOI, induction, greater-than,
docstring)~\citep{wang2022ioi,conmy2023acdc,mueller2025mib}, and only IOI documents \emph{backup} heads
specifically. We reduce -- but do not eliminate -- this single-circuit dependence three ways: a powered
controlled-redundancy synthetic benchmark that supplies the head-to-head statistical comparison the eight
documented backups cannot (Appendix~\ref{app:synth}); the label-free completeness criterion, which
validates the completed circuit without backup labels (\S\ref{sec:fcm}); and label-free generalization on
induction across scales and architectures. A second circuit with documented head-level backups would
nonetheless strengthen the labeled evaluation, and none is currently available. The signal is instantiated primarily at
attention-head granularity: it transfers to the attention-mediated induction circuit but not to the
MLP-mediated greater-than circuit. A preliminary FFN-group probe (Appendix~\ref{app:ffn}) finds
only weak recovery on greater-than, indicating it is weakly self-repairing at \emph{both} head and
FFN-group granularity -- a property of the circuit, not of the unit. Conditional discovery is seeded by
a primary set, and finding the primary components automatically is an open problem in its own right --
automatic circuit discovery remains an active, unsolved area~\citep{conmy2023acdc,mueller2025mib} and is
further hampered here by the same self-repair that masks the backups. Three natural
extensions follow: joint primary-and-backup discovery that breaks the masking on both sides at once; a
full FFN-level treatment of strongly MLP-mediated self-repair (which greater-than lacks); and decomposing
discovered backups into their known self-repair mechanisms (LayerNorm rescaling versus anti-erasure), of
which our activation-ratio check is a first
step.

\subsection{Broader impact}
\coax is an analysis tool for frozen models and trains nothing. By making redundant and backup
components visible without labels, it can improve the faithfulness of interpretability audits and the
safety of pruning, since a pruner unaware of backups can silently remove the components that keep a
behavior robust. We see no direct path to misuse beyond that of interpretability methods in general.

\subsection{Assets, licenses, and ethics}
All models are public pretrained checkpoints used under their respective licenses: GPT-2 and GPT-Neo
(MIT), Pythia and OLMo-2 (Apache-2.0), Gemma-2 (Gemma Terms of Use), Llama-3.1 (Llama Community License),
and Qwen-2.5 (Qwen License); we use them unmodified for inference-only analysis. Evaluation data are
WikiText-2 (perplexity), the templated IOI prompts of \citet{wang2022ioi}, and PIQA, ARC-Easy, and
HellaSwag (zero-shot accuracy), each used through the standard lm-eval-harness under its public research
license. No human subjects, private, or personally identifying data are involved; the IOI templates use
common given names only.

\subsection{Reproducibility}\label{app:repro}
All discovery and attribution experiments run on a single GPU in seconds to minutes
(Table~\ref{tab:scal}); the pruning sweeps to $70\%$ on the $7$B model are the most expensive, at a few
GPU-hours, and no training is performed at any point. GPT-2-scale discovery (kernel build, conditional
co-ablation, and all baselines) also runs on CPU in minutes, so the headline IOI results reproduce without a
GPU. Experiments use a single $24$\,GB GPU; the $13$B variants were out of scope for this budget. All
weights are loaded frozen, and the random seeds for prompt sampling and ablation-value draws are fixed and
listed in Table~\ref{tab:hparams}.

\end{document}